\theoremstyle{plain}
\newtheorem{theorem}{Theorem}[section]
\theoremstyle{definition}
\newtheorem{definition}[theorem]{Definition}
\definecolor{LightCyan}{rgb}{0.94,1,1}
\icmltitlerunning{\lowrank{}: Memory-Efficient LLM Training by Gradient Low-Rank Projection}
\renewcommand{\phi}{\varphi}
\newcommand{\rank}{\operatorname{rank}}
\newcommand{\diag}{\operatorname{diag}}
\def\eqref#1{equation~\ref{#1}}
\def\1{\bm{1}}
\def\rr{{\textnormal{r}}}
\def\vone{{\bm{1}}}
\def\vmu{{\bm{\mu}}}
\def\va{{\bm{a}}}
\def\ve{{\bm{e}}}
\def\vf{{\bm{f}}}
\def\vg{{\bm{g}}}
\def\vh{{\bm{h}}}
\def\vu{{\bm{u}}}
\def\vv{{\bm{v}}}
\def\vw{{\bm{w}}}
\def\vx{{\bm{x}}}
\def\vy{{\bm{y}}}
\def\vz{{\bm{z}}}
\DeclareMathAlphabet{\mathsfit}{\encodingdefault}{\sfdefault}{m}{sl}
\SetMathAlphabet{\mathsfit}{bold}{\encodingdefault}{\sfdefault}{bx}{n}
\newtheorem{property}{Property}
\newcommand{\cmark}{\ding{51}}%
\newcommand{\xmark}{\ding{55}}%
\definecolor{commentcolor}{RGB}{110,154,155}   %
\newcommand{\PyComment}[1]{\ttfamily\textcolor{commentcolor}{\# #1}}  %
\newcommand{\PyCode}[1]{\ttfamily\textcolor{black}{#1}} %
\newcommand{\lowrank}{GaLore\xspace}
\newcommand{\update}{\textsc{update}}
\begin{document}

\twocolumn[
\icmltitle{\lowrank{}: Memory-Efficient LLM Training by Gradient Low-Rank Projection}

\icmlsetsymbol{equal}{*}

\icmlsetaffilorder{caltech,meta,utaustin,cmu}

\begin{icmlauthorlist}
\icmlauthor{Jiawei Zhao}{caltech}
\icmlauthor{Zhenyu Zhang}{utaustin}
\icmlauthor{Beidi Chen}{meta,cmu}
\icmlauthor{Zhangyang Wang}{utaustin}
\icmlauthor{Anima Anandkumar}{equal,caltech}
\icmlauthor{Yuandong Tian}{equal,meta}
\end{icmlauthorlist}

\icmlaffiliation{caltech}{California Institute of Technology}
\icmlaffiliation{meta}{Meta AI}
\icmlaffiliation{utaustin}{University of Texas at Austin}
\icmlaffiliation{cmu}{Carnegie Mellon University}

\icmlcorrespondingauthor{Jiawei Zhao}{jiawei@caltech.edu}
\icmlcorrespondingauthor{Yuandong Tian}{yuandong@meta.com}

\vskip 0.3in
]

\printAffiliationsAndNotice{\icmlEqualContribution} %

\begin{abstract}
    Training Large Language Models (LLMs) presents significant memory challenges, predominantly due to the growing size of weights and optimizer states.
    Common memory-reduction approaches, such as low-rank adaptation (LoRA), add a trainable low-rank matrix to the frozen pre-trained weight in each layer. However, such approaches typically underperform training with full-rank weights in both pre-training and fine-tuning stages since they limit the parameter search to a low-rank subspace and alter the training dynamics, and further, may require full-rank warm start. 
    In this work, we propose Gradient Low-Rank Projection (\textbf{\lowrank}), a training strategy that allows \emph{full-parameter} learning but is more \emph{memory-efficient} than common low-rank adaptation methods such as LoRA.
    Our approach reduces memory usage by up to 65.5\% in optimizer states while maintaining both efficiency and performance for pre-training on LLaMA 1B and 7B architectures with C4 dataset with up to 19.7B tokens, and on fine-tuning RoBERTa on GLUE tasks.
    Our 8-bit \lowrank{} further reduces optimizer memory by up to 82.5\% and total training memory by 63.3\%, compared to a BF16 baseline.
    Notably, we demonstrate, for the first time, the feasibility of pre-training a 7B model on consumer GPUs with 24GB memory (e.g., NVIDIA RTX 4090) without model parallel, checkpointing, or offloading strategies. Code is provided in the \href{https://github.com/jiaweizzhao/GaLore}{link}.
\end{abstract}

\vspace{-8mm}
\section{Introduction}
Large Language Models (LLMs) have shown impressive performance across multiple disciplines, including conversational AI and language translation.
However, pre-training and fine-tuning LLMs require not only a huge amount of computation but is also memory intensive. The memory requirements include not only billions of trainable parameters,  but also their gradients and optimizer states (e.g., gradient momentum and variance in Adam) that can be larger than parameter storage themselves  ~\citep{raffelExploringLimitsTransfer2023,touvronLlamaOpenFoundation2023,chowdheryPaLMScalingLanguage2022}. For example, pre-training a LLaMA 7B model from scratch with a single batch size requires at least 58 GB memory (14GB for trainable parameters, 42GB for Adam optimizer states and weight gradients, and 2GB for activations\protect\footnotemark[1]).
This makes the training not feasible on consumer-level GPUs such as NVIDIA RTX 4090 with 24GB memory.

\definecolor{cus_color}{rgb}{22, 96, 55}

\makeatletter
\newcommand{\removelatexerror}{\let\@latex@error\@gobble}
\makeatother

\begin{figure}[!t]
\centering
\includegraphics[width=1\columnwidth]{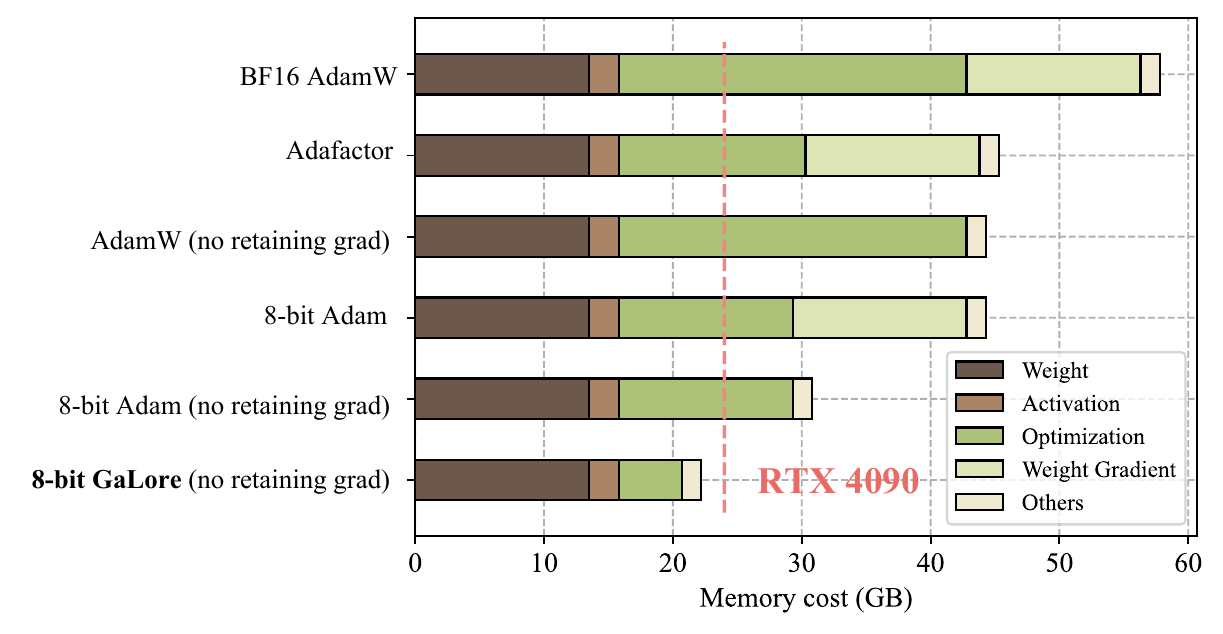}
\vskip -0.11in
\caption{\small{Estimated memory consumption of pre-training a LLaMA 7B model with a token batch size of 256 on a single device, without activation checkpointing and memory offloading\protect\footnotemark[2]. Details refer to Section~\ref{sec:memory_measure}.}}
\label{fig:memory_breakdown} 
\vskip -0.15in
\end{figure}
\footnotetext[1]{The calculation is based on LLaMA architecture, BF16 numerical format, and maximum sequence length of 2048.}
\footnotetext[2]{In the figure, ``no retaining grad'' denotes the application of per-layer weight update to reduce memory consumption of storing weight gradient \citep{lvFullParameterFinetuning2023}.}
\SetAlFnt{\fontsize{8pt}{9pt}\selectfont}
\SetAlCapFnt{\fontsize{8pt}{9pt}\selectfont}
\begin{algorithm}[t]
    \SetAlgoLined
        \PyCode{for weight in model.parameters():} \\
        \Indp   %
            \PyCode{grad = weight.grad} \\ 
            \PyComment{original space -> compact space} \\
            \PyCode{lor\_grad = \textbf{project}(grad)} \\
            \PyComment{update by Adam, Adafactor, etc.} \\
            \PyCode{lor\_update = \textbf{update}(lor\_grad)} \\
            \PyComment{compact space -> original space} \\
            \PyCode{update = \textbf{project\_back}(lor\_update)} \\
            \PyCode{weight.data += update} \\
        \Indm %
    \caption{\fontsize{8pt}{9pt}\selectfont{\lowrank{}, PyTorch-like}}
    \label{alg:code_box}
\end{algorithm}

In addition to engineering and system efforts, such as gradient checkpointing~\cite{chenTrainingDeepNets2016}, memory offloading~\cite{rajbhandariZeROMemoryOptimizations2020}, etc., to achieve faster and more efficient distributed training, researchers also seek to develop various optimization techniques to reduce the memory usage during pre-training and fine-tuning. 

\def\rr{\mathbb{R}}

Parameter-efficient fine-tuning (PEFT) techniques allow for the efficient adaptation of pre-trained language models (PLMs) to different downstream applications without the need to fine-tune all of the model's parameters \citep{dingDeltaTuningComprehensive2022}.
Among them, the popular Low-Rank Adaptation (LoRA~\citet{huLoRALowRankAdaptation2021}) \emph{reparameterizes} weight matrix $W\in \rr^{m\times n}$ into $W = W_0 + BA$, where $W_0$ is a frozen full-rank matrix and $B\in\rr^{m\times r}$, $A\in\rr^{r\times n}$ are additive low-rank adaptors to be learned. Since the rank $r \ll \min(m,n)$, $A$ and $B$ contain fewer number of trainable parameters and thus smaller optimizer states. LoRA has been used extensively to reduce memory usage for fine-tuning in which $W_0$ is the frozen pre-trained weight. Its variant ReLoRA is also used in pre-training, by periodically updating $W_0$ using previously learned low-rank adaptors \citep{lialinReLoRAHighRankTraining2023}.

However, many recent works demonstrate the limitation of such a low-rank reparameterization. For fine-tuning, LoRA is not shown to reach a comparable performance as full-rank fine-tuning~\cite{xiaChainLoRAEfficient2024}.
For pre-training from scratch, it is shown to require a full-rank model training as a warmup \citep{lialinReLoRAHighRankTraining2023}, before optimizing in the low-rank subspace. There are two possible reasons: (1) the optimal weight matrices may not be low-rank, and (2) the reparameterization changes the gradient training dynamics.

{\bf Our approach: }To address the above challenge, we propose Gradient Low-Rank Projection (\textbf{\lowrank}), a training strategy that allows \emph{full-parameter} learning but is more \emph{memory-efficient} than common low-rank adaptation methods, such as LoRA. Our key idea is to leverage the slow-changing low-rank structure of the \emph{gradient} $G\in\rr^{m\times n}$ of the weight matrix $W$, rather than trying to approximate the weight matrix itself as low rank. 

We first show theoretically that the gradient matrix $G$ becomes low-rank during training. Then, we propose \lowrank{} that computes two projection matrices $P\in \rr^{m\times r}$ and $Q\in \rr^{n\times r}$ to project the gradient matrix $G$ into a low-rank form $P^\top G Q$.
In this case, the memory cost of optimizer states, which rely on component-wise gradient statistics, can be substantially reduced. Occasional updates of $P$ and $Q$ (e.g., every 200 iterations) incur minimal amortized additional computational cost.
\lowrank is more memory-efficient than LoRA as shown in Table~\ref{tab:lora_compare}.
In practice, this yields up to 30\% memory reduction compared to LoRA during pre-training.

We demonstrate that \lowrank{} works well in both LLM pre-training and fine-tuning. When pre-training LLaMA 7B on C4 dataset, 8-bit \lowrank, combined with 8-bit optimizers and layer-wise weight updates techniques, achieves comparable performance to its full-rank counterpart, with less than 10\% memory cost of optimizer states. 

Notably, for pre-training, \lowrank{} keeps low memory throughout the entire training, without requiring full-rank training warmup like ReLoRA.  
Thanks to \lowrank's memory efficiency, it is possible to train LLaMA 7B from scratch on a single GPU with 24GB memory (e.g., on NVIDIA RTX 4090), without any costly memory offloading techniques (Fig.~\ref{fig:memory_breakdown}).

\lowrank{} is also used to fine-tune pre-trained LLMs on GLUE benchmarks with comparable or better results than existing low-rank methods. 
When fine-tuning RoBERTa-Base on GLUE tasks with a rank of 4, \lowrank{} achieves an average score of 85.89, outperforming LoRA, which achieves a score of 85.61.

As a gradient projection method, \lowrank{} is independent of the choice of optimizers and can be easily plugged into existing ones with only two lines of code, as shown in Algorithm~\ref{alg:code_box}. Our experiment (Fig.~\ref{fig:compare_optimizer}) shows that it works for popular optimizers such as AdamW, 8-bit Adam, and Adafactor. In addition, its performance is insensitive to very few hyper-parameters it introduces. We also provide theoretical justification on the low-rankness of gradient update, as well as the convergence analysis of \lowrank.

\section{Related Works}
\paragraph{Low-rank adaptation.}
\citet{huLoRALowRankAdaptation2021} proposed Low-Rank Adaptation (LoRA) to fine-tune pre-trained models with low-rank adaptors. 
This method reduces the memory footprint by maintaining a low-rank weight adaptor for each layer.
There are a few variants of LoRA proposed to enhance its performance \citep{renduchintalaTiedLoraEnhacingParameter2023, shengSLoRAServingThousands2023, zhangLORAFAMEMORYEFFICIENTLOWRANK, xiaChainLoRAEfficient2024}, supporting multi-task learning \citep{wangMultiLoRADemocratizingLoRA2023}, and further reducing the memory footprint \citep{dettmersQLoRAEfficientFinetuning2023}.
\citet{lialinReLoRAHighRankTraining2023} proposed ReLoRA, a variant of LoRA designed for pre-training, but requires a full-rank training warmup to achieve comparable performance as the standard baseline.
Inspired by LoRA, \citet{haoFloraLowRankAdapters2024} also suggested that gradients can be compressed in a low-rank subspace, and they proposed to use random projections to compress the gradients.
There have also been approaches that propose training networks with low-rank factorized weights from scratch \citep{kamalakaraExploringLowRank2022,wangCuttlefishLowrankModel2023,zhaoInRankIncrementalLowRank2023}.

\paragraph{Subspace learning.}
Recent studies have demonstrated that the learning primarily occurs within a significantly low-dimensional parameter subspace \citep{gur-ariGradientDescentHappens2018,larsenHowManyDegrees2022}.
These findings promote a special type of learning called \textit{subspace learning}, where the model weights are optimized within a low-rank subspace. 
This notion has been widely used in different domains of machine learning, including meta-learning and continual learning \citep{leeGradientBasedMetaLearningLearned2018,chaudhryContinualLearningLowrank2020}.

\paragraph{Projected gradient descent.}
\lowrank{} is closely related to the traditional topic of projected gradient descent (PGD) \citep{chenFastLowrankEstimation2015, chenNonConvexProjectedGradient2019}. 
A key difference is that, \lowrank{} considers the specific gradient form that naturally appears in training multi-layer neural networks (e.g., it is a matrix with specific structures), proving many of its properties (e.g., Lemma~\ref{lemma:gradientlowrank}, Theorem~\ref{thm:gradientreversible}, and Theorem~\ref{thm:convgpg}). In contrast, traditional PGD mostly treats the objective as a general blackbox nonlinear function, and study the gradients in the vector space only. 

\paragraph{Low-rank gradient.}
Gradient is naturally low-rank during training of neural networks, and this property have been studied in both theory and practice \citep{zhaoZerOInitializationInitializing2022,cossonLowRankGradientDescent2023,yang2023spectral}.
It has been applied to reduce communication cost \citep{wangATOMOCommunicationefficientLearning,vogelsPowerGossipPracticalLowRank2020}, and memory footprint during training \citep{gooneratneLowrankGradientApproximation2020,huangLowRankGradientDescent2023,modoranuErrorFeedbackCan2024}.

\paragraph{Memory-efficient optimization.}
There have been some works trying to reduce the memory cost of gradient statistics for adaptive optimization algorithms \citep{shazeerAdafactorAdaptiveLearning,anilMemoryEfficientAdaptive,dettmers8bitOptimizersBlockwise2021}. 
Quantization is widely used to reduce the memory cost of optimizer states \citep{dettmers8bitOptimizersBlockwise2021,liMemoryEfficientOptimizers2023}.
Recent works have also proposed to reduce weight gradient memory by fusing the backward operation with the optimizer update \citep{lvAdaLomoLowmemoryOptimization2023,lvFullParameterFinetuning2023}.

\def\cG{\mathcal{G}}
\def\rr{\mathbb{R}}

\section{\lowrank: Gradient Low-Rank Projection}
\subsection{Background}

\textbf{Regular full-rank training.} At time step $t$, $G_t = -\nabla_W \phi_t(W_t) \in \rr^{m \times n}$ is the backpropagated (negative) gradient matrix. Then the regular pre-training weight update can be written down as follows ($\eta$ is the learning rate):
\begin{equation}
    W_T = W_0 + \eta \sum_{t=0}^{T-1} \tilde G_{t} = W_0 + \eta\sum_{t=0}^{T-1} \rho_t(G_t)
\end{equation}
where $\tilde G_t$ is the final processed gradient to be added to the weight matrix and $\rho_t$ is an entry-wise stateful gradient regularizer (e.g., Adam). The state of $\rho_t$ can be memory-intensive. For example, for Adam, we need $M,V \in \rr^{m\times n}$ to regularize the gradient $G_t$ into $\tilde G_{t}$:
\begin{eqnarray}
    M_t &=& \beta_1 M_{t-1} + (1-\beta_1) G_t \\  
    V_t &=& \beta_2 V_{t-1} + (1-\beta_2) G^2_t  \\
    \tilde G_t &=& M_t / \sqrt{V_t + \epsilon}
\end{eqnarray}
Here $G_t^2$ and $M_t / \sqrt{V_t + \epsilon}$ means element-wise multiplication and division. $\eta$ is the learning rate. Together with $W\in \rr^{m\times n}$, this takes $3mn$ memory. 

\textbf{Low-rank updates.} For a linear layer $W \in \mathbb{R}^{m \times n}$, LoRA and its variants utilize the low-rank structure of the update matrix by introducing a low-rank adaptor $AB$:
\begin{equation}
    W_T = W_0 + B_{T}A_{T},
\end{equation}
where $B \in \mathbb{R}^{m \times r}$ and $A \in \mathbb{R}^{r \times n}$, and $r \ll \min(m, n)$. $A$ and $B$ are the learnable low-rank adaptors and $W_0$ is a fixed weight matrix (e.g., pre-trained weight).

\subsection{Low-Rank Property of Weight Gradient}
While low-rank updates are proposed to reduce memory usage, it remains an open question whether the weight matrix should be parameterized as low-rank. In many situations, this may not be true. For example, in linear regression $\vy = W\vx$, if the optimal $W^*$ is high-rank, then imposing a low-rank assumption on $W$ never leads to the optimal solution, regardless of what optimizers are used. 

\def\beig{\lambda}
\def\ceig{\nu}

\def\bmin{\underline{\beig}}
\def\cmin{\underline{\ceig}}

\def\cN{\mathcal{N}}

Surprisingly, while the weight matrices are not necessarily low-rank, the gradient indeed becomes low-rank during the training for certain gradient forms and associated network architectures. 

\textbf{Reversible networks.} Obviously, for a general loss function, its gradient can be arbitrary and is not necessarily low rank. Here we study the gradient structure for a general family of nonlinear networks known as ``reversible networks''~\cite{tian2020understanding}, which includes not only simple linear networks but also deep ReLU/polynomial networks:

\begin{definition}[Reversiblity~\cite{tian2020understanding}]
A network $\cN$ that maps input $\vx$ to output $\vy = \cN(\vx)$ is \emph{reversible}, if there exists $L(\vx; W)$ so that $\vy= L(\vx; W)\vx$, and the backpropagated gradient $\vg_\vx$ satisfies $\vg_\vx = L^\top(\vx; W) \vg_\vy$, where $\vg_\vy$ is the backpropagated gradient at the output $\vy$. Here $L(\vx;W)$ depends on the input $\vx$ and weight $W$ in the network $\cN$. 
\end{definition}

Please check Appendix~\ref{sec:reversibility} for its properties. For reversible networks, the gradient takes a specific form. 

\begin{restatable}[Gradient Form of reversible models]{theorem}{gradientreversible}
\label{thm:gradientreversible}
Consider a chained reversible neural network $\cN(\vx) := \cN_L(\cN_{L-1}(\ldots\cN_1(\vx)))$ and define $J_l := \mathrm{Jacobian}(\cN_L) \ldots \mathrm{Jacobian}(\cN_{l+1})$ and $\vf_l := \cN_l(\ldots \cN_1(\vx))$. Then the weight matrix $W_l$ at layer $l$ has gradient $G_l$ in the following form for batch size 1:  

\textbf{(a)} For $\ell_2$-objective $\phi := \frac12\|\vy - \vf_L\|_2^2$: 
\begin{equation}
    G_l = \left(J_l^\top \vy - J^\top_l J_l W_l \vf_{l-1}\right)\vf_{l-1}^\top \label{eq:reversible-grad}
\end{equation}

\textbf{(b)} Left $P^\perp_\vone := I - \frac{1}{K}\vone\vone^\top$ be the zero-mean PSD projection matrix. For $K$-way logsoftmax loss $\phi(\vy; \vf_L) := -\log \left( \frac{\exp(\vy^\top \vf_L)}{\vone^\top \exp(\vf_L)}\right)$ with small logits $\|P^\perp_\vone\vf_L\|_\infty \ll \sqrt{K}$: 
\begin{equation}
    G_l = \left(J_lP^\perp_\vone \vy - \gamma K^{-1}J_l^\top P^\perp_\vone J_l W_l \vf_{l-1}\right)\vf_{l-1}^\top
\end{equation}
where $\gamma \approx 1$ and $\vy$ is a data label with $\vy^\top \vone = 1$.
\end{restatable}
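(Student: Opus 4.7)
The plan is to derive a common master expression for $G_l$ from reversibility together with the chain rule, and then specialize it to each of the two losses. First, I would invoke the defining reversibility identity $\nabla_{\vf_{l-1}}\phi = L_l^\top \nabla_{\vf_l}\phi$ at every layer; an easy induction on the depth above $l$ then gives $\nabla_{\vf_l}\phi = J_l^\top \nabla_{\vf_L}\phi$. Second, for any reversible layer the output $\vf_l$ is linear in $W_l\vf_{l-1}$ once $\vf_{l-1}$ is fixed (e.g., a diagonal ReLU mask depends on the input but not on $W_l$ given that input; this is recorded in Appendix~\ref{sec:reversibility}), so the matrix chain rule yields $\nabla_{W_l}\phi = (\nabla_{\vf_l}\phi)\vf_{l-1}^\top$. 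Combining these two facts would produce the master expression
\[
G_l \;=\; -\nabla_{W_l}\phi \;=\; -J_l^\top\bigl(\nabla_{\vf_L}\phi\bigr)\,\vf_{l-1}^\top,
\]
and unrolling the same chain forward would give $\vf_L = J_l W_l \vf_{l-1}$.

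For part (a), I would simply plug $\nabla_{\vf_L}\phi = \vf_L - \vy$ and $\vf_L = J_l W_l \vf_{l-1}$ into the master expression; the claimed form $G_l = (J_l^\top \vy - J_l^\top J_l W_l \vf_{l-1})\vf_{l-1}^\top$ in \eqref{eq:reversible-grad} then drops out with no further work.

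For part (b), the new ingredient is a Taylor expansion of $\mathrm{softmax}$ about the uniform distribution. Since $\mathrm{softmax}$ is translation-invariant, one can replace $\vf_L$ by $P^\perp_\vone\vf_L$ inside it. Expanding to first order under the hypothesis $\|P^\perp_\vone\vf_L\|_\infty \ll \sqrt K$ would yield $\mathrm{softmax}(\vf_L) = \tfrac{1}{K}\vone + \tfrac{1}{K}P^\perp_\vone\vf_L + O(\|P^\perp_\vone\vf_L\|^2/K)$. Using $\vy^\top\vone = 1$ gives the useful identity $\vy - \tfrac{1}{K}\vone = P^\perp_\vone \vy$, so $-\nabla_{\vf_L}\phi = \vy - \mathrm{softmax}(\vf_L) = P^\perp_\vone\vy - K^{-1}P^\perp_\vone\vf_L + \text{h.o.t.}$ Plugging this back into the master expression and substituting $\vf_L = J_l W_l \vf_{l-1}$ would yield the advertised form, with the scalar $\gamma$ absorbing the second-order softmax remainder.

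The hardest part is precisely this last point: justifying the ``$\gamma \approx 1$'' claim rigorously requires controlling the softmax second-order remainder uniformly on the region $\{\vf : \|P^\perp_\vone \vf\|_\infty \ll \sqrt K\}$ and then propagating that bound through the linear map $K^{-1}J_l^\top$. I would handle this by bounding the Hessian of $\log\mathrm{softmax}$ restricted to the zero-mean subspace in operator norm (where it scales like $1/K$) and folding the resulting scalar correction into $\gamma$. Every other step is a routine algebraic manipulation once the master expression $G_l = -J_l^\top(\nabla_{\vf_L}\phi)\vf_{l-1}^\top$ is in place.
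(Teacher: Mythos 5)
Your proposal is correct and follows essentially the same route as the paper: both derive the master form $G_l = -J_l^\top(\nabla_{\vf_L}\phi)\,\vf_{l-1}^\top$ with $\vf_L = J_l W_l \vf_{l-1}$ from reversibility plus the linear-layer chain rule, then specialize to the $\ell_2$ loss and, for the logsoftmax case, perform a small-logit Taylor expansion about the uniform distribution with $\gamma\approx 1$ absorbing the second-order remainder (the paper expands the log-partition function and differentiates, you expand the softmax gradient directly --- an immaterial difference).
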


\def\dd{\mathrm{d}}
\def\gzeroproj{G_{t_0}^\parallel}
\def\cV{\mathcal{V}}

From the theoretical analysis above, we can see that for batch size $N$, the gradient $G$ has certain structures: $G = \frac{1}{N}\sum_{i=1}^N (A_i - B_i W C_i)$ for input-dependent matrix $A_i$, Positive Semi-definite (PSD) matrices $B_i$ and $C_i$. In the following, we prove that such a gradient will become low-rank during training in certain conditions:

\def\sr{\mathrm{sr}}

\begin{restatable}[Gradient becomes low-rank during training]{lemma}{gradientlowrank}
\label{lemma:gradientlowrank}
    Suppose the gradient follows the parametric form: 
    \begin{eqnarray}
          G_t=\frac{1}{N}\sum_{i=1}^N (A_i-B_i W_t C_i)\label{eq:constantgradientcoeff}
    \end{eqnarray} 
    with constant $A_i$, PSD matrices $B_i$ and $C_i$ after $t \ge t_0$. We study vanilla SGD weight update: $W_t=W_{t-1}+\eta G_{t-1}$. Let $S := \frac{1}{N}\sum_{i=1}^N C_i \otimes B_i$ and $\lambda_1 < \lambda_2$ its two smallest distinct eigenvalues. Then the stable rank $\sr(G_t)$ satisfies:
    \begin{equation}
        \sr(G_t) \le \sr(\gzeroproj)\!+\!\left(\frac{1\!-\!\eta \lambda_2}{1\!-\!\eta \lambda_1}\right)^{2(t-t_0)} \frac{\|G_0\!-\!\gzeroproj\|_F^2}{\|\gzeroproj\|_2^2} \label{eq:stable-rank-decay}
    \end{equation}
    where $\gzeroproj$ is the projection of $G_{t_0}$ onto the minimal eigenspace $\cV_1$ of $S$ corresponding to $\lambda_1$.
\end{restatable}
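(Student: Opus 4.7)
The plan is to pass to vectorised dynamics, diagonalise the resulting linear recursion, and then track how each eigencomponent decays.

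First I would differentiate the gradient formula (\ref{eq:constantgradientcoeff}) across one step. Since $A_i$, $B_i$, $C_i$ are constant for $t\ge t_0$ and the update is $W_{t+1}=W_t+\eta G_t$, subtracting yields
\begin{equation}
G_{t+1}-G_t=-\frac{\eta}{N}\sum_{i=1}^N B_i G_t C_i.
\end{equation}
Applying the Kronecker identity $\mathrm{vec}(B G C)=(C^\top\otimes B)\mathrm{vec}(G)$ together with the symmetry of the PSD matrices $C_i$, this becomes the linear recursion
\begin{equation}
\mathrm{vec}(G_{t+1})=(I-\eta S)\,\mathrm{vec}(G_t),\qquad S=\frac{1}{N}\sum_{i=1}^N C_i\otimes B_i,
\end{equation}
valid for every $t\ge t_0$. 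Crucially, $S$ is PSD since the Kronecker product of PSD matrices is PSD and PSD is closed under sums.

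Second, I would diagonalise $S$ and peel off the minimal-eigenvalue component. Let $\cV_1\subseteq\rr^{mn}$ be the eigenspace for the smallest eigenvalue $\beig_1$, write $\mathrm{vec}(G_{t_0})=\mathrm{vec}(\gzeroproj)+\mathrm{vec}(G_{t_0}^\perp)$ with $\mathrm{vec}(G_{t_0}^\perp)\in\cV_1^\perp$, and iterate the recursion starting at $t_0$. Because $(I-\eta S)$ acts as $(1-\eta\beig_1)$ on $\cV_1$ and has all remaining eigenvalues bounded (in absolute value) by $|1-\eta\beig_2|$ on $\cV_1^\perp$, one obtains the clean decomposition
\begin{equation}
\mathrm{vec}(G_t)=(1-\eta\beig_1)^{t-t_0}\,\mathrm{vec}(\gzeroproj)+\mathrm{vec}(R_t),\quad R_t\in\cV_1^\perp,
\end{equation}
with $\|R_t\|_F\le(1-\eta\beig_2)^{t-t_0}\,\|G_{t_0}-\gzeroproj\|_F$.

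Third, I would assemble the stable-rank bound. The orthogonality $\langle \gzeroproj,R_t\rangle_F=0$ gives
\begin{equation}
\|G_t\|_F^2=(1-\eta\beig_1)^{2(t-t_0)}\|\gzeroproj\|_F^2+\|R_t\|_F^2,
\end{equation}
while the denominator is handled by isolating the slowest-decaying component, namely $\|G_t\|_2\ge(1-\eta\beig_1)^{t-t_0}\|\gzeroproj\|_2$. Dividing and substituting the bound on $\|R_t\|_F$ reproduces (\ref{eq:stable-rank-decay}).

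I expect the third step to be the main obstacle. The Frobenius decomposition is immediate from vector orthogonality, but the spectral-norm lower bound is subtler: $\gzeroproj$ and $R_t$ are orthogonal in the Frobenius inner product yet need not be ``orthogonal'' as operators, so in principle $R_t$ could cancel the top singular direction of $\gzeroproj$. I would address this via a Weyl-type inequality, $\sigma_{\max}(G_t)\ge(1-\eta\beig_1)^{t-t_0}\sigma_{\max}(\gzeroproj)-\|R_t\|_2$, and then absorb the correction using $\|R_t\|_2\le\|R_t\|_F$ together with the strict contraction ratio $(1-\eta\beig_2)/(1-\eta\beig_1)<1$, which is what makes the stated bound asymptotically meaningful. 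Positivity of $1-\eta\beig_1$ (a mild step-size assumption) is used throughout to avoid sign flips when raising to powers.
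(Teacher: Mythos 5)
Your first two steps---the vectorized recursion $\mathrm{vec}(G_t)=(I-\eta S)\,\mathrm{vec}(G_{t-1})$, the split of $G_{t_0}$ into its projection $\gzeroproj$ onto the minimal eigenspace $\cV_1$ plus an orthogonal residual, and the resulting Frobenius-norm decay estimates---coincide with the paper's argument. The gap is exactly where you predicted it, and your proposed fix does not close it. The Weyl-type bound $\sigma_{\max}(G_t)\ge(1-\eta\lambda_1)^{t-t_0}\sigma_{\max}(\gzeroproj)-\|R_t\|_2$ leaves a subtractive error of order $(1-\eta\lambda_2)^{t-t_0}\|G_{t_0}-\gzeroproj\|_F$, and this cannot simply be ``absorbed'': whenever $\|G_{t_0}-\gzeroproj\|_F$ is large compared with $\|\gzeroproj\|_2$ (the typical situation, since the residual is what has not yet decayed), the right-hand side is zero or negative for small and moderate $t-t_0$, so you get no useful lower bound on $\|G_t\|_2$ at all; even asymptotically you only recover Eq.~\ref{eq:stable-rank-decay} up to an extra multiplicative factor, not the inequality as stated for every $t\ge t_0$.

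The paper avoids Weyl entirely with a test-vector argument in which the cross term vanishes exactly rather than being estimated. Write the SVD $\gzeroproj=\sum_l c_l\,\vz_l\vy_l^\top$; the vectorized singular pairs $\vv_l:=\vy_l\otimes\vz_l$ are unit vectors lying in $\cV_1$, hence eigenvectors of the symmetric matrix $S$ with eigenvalue $\lambda_1$. Using the variational characterization of the spectral norm with the rank-one test matrix $\vz_l\vy_l^\top$, one gets $\|G_t\|_2\ \ge\ \vz_l^\top G_t\vy_l\ =\ \vv_l^\top(I-\eta S)^{t-t_0}\mathrm{vec}(G_{t_0})\ =\ (1-\eta\lambda_1)^{t-t_0}\,\vv_l^\top\mathrm{vec}(G_{t_0})\ =\ (1-\eta\lambda_1)^{t-t_0}c_l$, because $\mathrm{vec}(G_{t_0})-\mathrm{vec}(\gzeroproj)$ is orthogonal to $\cV_1$ and therefore contributes nothing through $\vv_l$---the operator-level cancellation you worried about never enters. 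Maximizing over $l$ gives $\|G_t\|_2\ge(1-\eta\lambda_1)^{t-t_0}\|\gzeroproj\|_2$ with no correction term, and dividing your Frobenius bound by the square of this yields Eq.~\ref{eq:stable-rank-decay} verbatim (note this uses the paper's implicit structural fact that the singular pairs of $\gzeroproj$, once vectorized, lie in $\cV_1$, which is what the corollaries verify in their settings). Replacing your step three with this argument is the missing idea; the rest of your outline stands.
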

In practice, the constant assumption can approximately hold for some time, in which the second term in Eq.~\ref{eq:stable-rank-decay} goes to zero exponentially and the stable rank of $G_t$ goes down, yielding low-rank gradient $G_t$. The final stable rank is determined by $\sr(\gzeroproj)$, which is estimated to be low-rank by the following: 
\begin{restatable}[Low-rank $G_t$]{corollary}{lowrankmid}
\label{co:low-rank-mid}
If the gradient takes the parametric form $G_t = \frac{1}{N}\sum_{i=1}^N (\va_i - B_i W_t \vf_i)\vf_i^\top$ with all $B_i$ full-rank, and $N' := \rank(\{\vf_i\}) < n$, then $\sr(\gzeroproj) \le n - N'$ and thus $\sr(G_t) \le n/2$ for large $t$. 
\end{restatable}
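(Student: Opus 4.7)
The plan is to specialize Lemma~\ref{lemma:gradientlowrank} by matching the corollary's form $G_t = \frac{1}{N}\sum_i (\va_i - B_i W_t \vf_i)\vf_i^\top$ against the generic form in the lemma, which gives $A_i = \va_i \vf_i^\top$ and $C_i = \vf_i \vf_i^\top$. The operator governing the dynamics becomes
\[
    S \;=\; \frac{1}{N}\sum_{i=1}^N (\vf_i \vf_i^\top) \otimes B_i,
\]
and by the standard Kronecker identity, $S\,\text{vec}(W) = \text{vec}\!\bigl(\tfrac{1}{N}\sum_i B_i W \vf_i \vf_i^\top\bigr)$. Since $S$ is a sum of tensor products of PSD matrices, $S$ is PSD; consequently its spectrum is nonnegative, and its minimal eigenspace coincides with $\nullsp(S)$ whenever that kernel is nontrivial.

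Next I would compute $\nullsp(S)$ explicitly. Viewing $S$ as the Hessian of the quadratic form $W \mapsto \frac{1}{2N}\sum_i \|B_i^{1/2} W \vf_i\|_2^2$, the kernel consists of all $W$ for which $B_i^{1/2} W \vf_i = 0$ for every $i$. Since each $B_i$ is full-rank, this reduces to $W\vf_i = 0$ for all $i$, i.e., the rows of $W$ lie in $\sspan\{\vf_i\}^\perp$, an $(n-N')$-dimensional subspace of $\rr^n$. The assumption $N' < n$ makes this subspace nontrivial, so $\lambda_1 = 0$ and $\cV_1$ is precisely the set of matrices whose rows lie in $\sspan\{\vf_i\}^\perp$.

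Since $\gzeroproj \in \cV_1$, its rows lie in a subspace of dimension $n - N'$, giving $\rank(\gzeroproj) \le n - N'$ and therefore $\sr(\gzeroproj) \le n - N'$. Plugging this into Eq.~\eqref{eq:stable-rank-decay} and using $\lambda_1 = 0 < \lambda_2$ (the next-smallest eigenvalue of $S$, which is strictly positive whenever some $\vf_i \ne 0$), the second summand decays geometrically in $t-t_0$; for sufficiently large $t$ we obtain $\sr(G_t) \le n - N' + o(1)$. When $N' \ge n/2$ (the intended regime where the batch of features spans at least half of the input dimension), this specializes to $\sr(G_t) \lesssim n/2$, recovering the stated bound.

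The main technical obstacle is verifying the spectral structure required to apply the lemma: one must confirm that the claimed $\lambda_1 = 0$ really is the minimum (automatic from PSD-ness of $S$) and that the next distinct eigenvalue $\lambda_2$ is strictly positive with $\eta\lambda_2 < 2$, so that the ratio $(1-\eta\lambda_2)/(1-\eta\lambda_1) \in (-1,1)$ and the exponential decay in Eq.~\eqref{eq:stable-rank-decay} actually kicks in. A secondary subtlety is ensuring that the projection $\gzeroproj$ is taken with respect to the correct inner product on $\text{vec}(W)$-space (standard Euclidean on $\rr^{mn}$, under which the Kronecker description of $\cV_1$ as ``rows in $\sspan\{\vf_i\}^\perp$'' is valid); once this is in place, the remaining manipulations are routine linear algebra.
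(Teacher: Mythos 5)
Your derivation of $\sr(\gzeroproj)\le n-N'$ is sound and is essentially the paper's argument in different clothing: you characterize the minimal eigenspace $\cV_1=\nullsp(S)$ via the quadratic form $\frac1N\sum_i\|B_i^{1/2}W\vf_i\|_2^2$ (using full-rankness of $B_i$ to reduce to $W\vf_i=0$), whereas the paper exhibits the explicit basis $\{\vu_j\otimes\ve_k\}$ of $\cV_1$ with $\vu_j$ spanning $\sspan\{\vf_i\}^\perp$; both give matrices of rank at most $n-N'$, hence the bound on the stable rank of the projection.

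However, there is a genuine gap in the second half. The corollary claims $\sr(G_t)\le n/2$ for large $t$ with no restriction on $N'$ beyond $N'<n$, but your argument only yields $\sr(G_t)\lesssim n-N'$, and you then smuggle in the extra hypothesis $N'\ge n/2$ (``the intended regime'') to convert this into $n/2$. When $N'<n/2$ your bound $n-N'$ exceeds $n/2$ and the claim is not recovered. The missing ingredient is a second, direct bound on $G_t$ itself: writing each $\vf_i=\sum_{j=1}^{N'}b_{ij}\vf'_j$ in a basis $\{\vf'_j\}$ of $\sspan\{\vf_i\}$, one gets $G_t=\frac1N\sum_{j=1}^{N'}\bigl[\sum_i b_{ij}(\va_i-B_iW_t\vf_i)\bigr]\vf_j^{\prime\top}$, a sum of $N'$ rank-one matrices, so $\sr(G_t)\le\rank(G_t)\le N'$ at every $t$. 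Combining this with the asymptotic bound $n-N'$ from the lemma gives $\sr(G_t)\le\min(N',\,n-N')\le n/2$ unconditionally, which is how the paper closes the argument. Your technical remarks about needing $\lambda_1=0<\lambda_2$ and a decaying ratio in Eq.~\eqref{eq:stable-rank-decay} are fine but do not repair this omission.
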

\textbf{Remarks.} The gradient form is justified by  Theorem~\ref{thm:gradientreversible}. Intuitively, when $N'$ is small, $G_t$ is a summation of $N'$ rank-1 update and is naturally low rank; on the other hand, when $N'$ becomes larger and closer to $n$, then the training dynamics has smaller null space $\cV_1$, which also makes $G_t$ low-rank. The full-rank assumption of $\{B_i\}$ is reasonable, e.g., in LLMs, the output dimensions of the networks (i.e., the vocabulary size) is often huge compared to matrix dimensions. 

In general if the batch size $N$ is large, then it becomes a bit tricky to characterize the minimal eigenspace $\cV_1$ of $S$. On the other hand, if $\cV_1$ has nice structure, then $\sr(G_t)$ can be bounded even further: 
\begin{restatable}[Low-rank $G_t$ with special structure of $\cV_1$]{corollary}{lowrankhigh}
If $\cV_1(S)$ is 1-dimensional with decomposable eigenvector $\vv = \vy \otimes \vz$, then $\sr(\gzeroproj) = 1$ and thus $G_t$ becomes rank-1.
\end{restatable}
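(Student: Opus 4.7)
The plan is to apply Lemma~\ref{lemma:gradientlowrank} once the stable-rank quantity $\sr(\gzeroproj)$ has been pinned down under the extra assumption on $\cV_1(S)$. The first step is to translate the Kronecker-decomposability of the eigenvector into matrix language. Viewing $\rr^{mn}$ as $\rr^{m \times n}$ via the $\mathrm{vec}$ operator, the identity $\mathrm{vec}(\vz\vy^\top) = \vy \otimes \vz$ shows that the decomposable eigenvector $\vv = \vy \otimes \vz$ corresponds to the rank-1 matrix $V := \vz\vy^\top$; consequently $\cV_1(S)$, being spanned by the single vector $\vv$, corresponds in matrix form to the line $\mathrm{span}(V)$. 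Since $\gzeroproj$ is the orthogonal projection of $G_{t_0}$ onto $\cV_1$, it satisfies $\gzeroproj = c\, V$ for some $c \in \rr$. Assuming $c \neq 0$ (the generic case; the degenerate case $c=0$ would simply make $\gzeroproj$ vanish and the argument could be repeated on the next-lowest eigenspace), $\gzeroproj$ is a nonzero rank-1 matrix, so $\sr(\gzeroproj) = \|\gzeroproj\|_F^2 / \|\gzeroproj\|_2^2 = 1$.

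The second step is to plug $\sr(\gzeroproj) = 1$ into Eq.~\ref{eq:stable-rank-decay}, yielding
\begin{equation*}
    \sr(G_t) \;\le\; 1 + \left(\frac{1 - \eta \lambda_2}{1 - \eta \lambda_1}\right)^{\!2(t-t_0)} \frac{\|G_0 - \gzeroproj\|_F^2}{\|\gzeroproj\|_2^2}.
\end{equation*}
With $\lambda_1 < \lambda_2$ and $\eta$ in the stable range where the geometric ratio has magnitude strictly less than $1$, the second term decays to $0$ as $t$ grows. Combined with the elementary fact that $\sr(M) \ge 1$ for every nonzero matrix $M$ (since $\|M\|_F^2 \ge \|M\|_2^2$), this forces $\sr(G_t) \to 1$; and since $\sr(M) = 1$ holds precisely when $M$ is rank-1, we conclude that $G_t$ becomes rank-1 asymptotically.

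The main obstacle is essentially bookkeeping rather than substance: one must fix a vectorization convention and verify the consistency between the Kronecker operator $S = \tfrac{1}{N}\sum_i C_i \otimes B_i$ acting on $\rr^{mn}$ and its matrix action $M \mapsto \tfrac{1}{N}\sum_i B_i M C_i^\top$ on $\rr^{m \times n}$, so that Kronecker-decomposability of the eigenvector genuinely corresponds to matrix rank-1. The only mild subtlety is handling the non-generic projection-zero case, but since Lemma~\ref{lemma:gradientlowrank} already yields the decay bound from an arbitrary reference point in $\cV_1$, this does not affect the qualitative conclusion. Past these conventions, the corollary is a direct specialization of Lemma~\ref{lemma:gradientlowrank} and requires no new dynamical argument.
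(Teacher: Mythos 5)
Your argument is correct and matches the paper's own proof: the projection onto the one-dimensional eigenspace spanned by $\vv = \vy \otimes \vz$ is proportional to $\vv$, whose matricization $\vz\vy^\top$ is rank-1, giving $\sr(\gzeroproj)=1$, after which Lemma~\ref{lemma:gradientlowrank} drives $\sr(G_t)\to 1$. Your extra handling of the degenerate case $\gzeroproj = 0$ is a sensible addition that the paper only addresses informally in the main-text remark about that failure mode.
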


One rare failure case of Lemma~\ref{lemma:gradientlowrank} is when $\gzeroproj$ is precisely zero, in which $\sr(\gzeroproj)$ becomes undefined. This happens to be true if $t_0 = 0$, i.e., $A_i$, $B_i$ and $C_i$ are constant throughout the entire training process. Fortunately, for practical training, this does not happen. 

\def\vdelta{\boldsymbol{\Delta}}

\textbf{Transformers.} For Transformers, we can also separately prove that the weight gradient of the lower layer (i.e., \emph{project-up}) weight of feed forward network (FFN) becomes low rank over time, using the JoMA framework~\cite{tian2023joma}. Please check Appendix (Sec.~\ref{sec:transformer-low-rank}) for details.

\subsection{Gradient Low-rank Projection (\lowrank{})}
Since the gradient $G$ may have a low-rank structure, if we can keep the gradient statistics of a small ``core'' of gradient $G$ in optimizer states, rather than $G$ itself, then the memory consumption can be reduced substantially. This leads to our proposed \lowrank{} strategy: 
\begin{definition}[Gradient Low-rank Projection (\textbf{\lowrank})]
Gradient low-rank projection (\textbf{\lowrank}) denotes the following gradient update rules ($\eta$ is the learning rate):
\begin{equation}
    \label{eq:represent_low_rank_updates}
    W_T = W_0 + \eta\sum_{t=0}^{T-1} \tilde G_{t}, \quad \tilde G_t = P_t \rho_t(P_t^\top G_t Q_t) Q^\top_t
\end{equation}
where $P_t \in \mathbb{R}^{m \times r}$ and $Q_t \in \mathbb{R}^{n\times r}$ are projection matrices. 
\end{definition}
Different from LoRA, \lowrank{} \textit{explicitly utilizes the low-rank updates} instead of introducing additional low-rank adaptors and hence does not alter the training dynamics. 

In the following, we show that \lowrank{} converges under a similar (but more general) form of gradient update rule (Eqn.~\ref{eq:constantgradientcoeff}). This form corresponds to Eqn.~\ref{eq:reversible-grad} but with a larger batch size.  

\def\ee{\mathbb{E}}
\def\cW{\mathcal{W}}
\def\cN{\mathcal{N}}

\begin{definition}[$L$-continuity]
A function $\vh(W)$ has (Lipschitz) $L$-continuity, if for any $W_1$ and $W_2$, $\|\vh(W_1) - \vh(W_2)\|_F \le L\|W_1-W_2\|_F$. 
\end{definition}

\begin{restatable}[Convergence of \lowrank with fixed projections]{theorem}{convgpg}
    \label{thm:convgpg}
    Suppose the gradient has the form of Eqn.~\ref{eq:constantgradientcoeff} and $A_i$, $B_i$ and $C_i$ have $L_A$, $L_B$ and $L_C$ continuity with respect to $W$ and $\|W_t\|\le D$. Let $R_t := P_t^\top G_t Q_t$, $\hat B_{it} := P_t^\top B_{i}(W_t) P_t$, $\hat C_{it} := Q_t^\top C_i(W_t) Q_t$ and $\kappa_t := \frac1N \sum_i \lambda_{\min}(\hat B_{it}) \lambda_{\min}(\hat C_{it})$. If we choose constant $P_t = P$ and $Q_t=Q$, then \lowrank{} with $\rho_t \equiv 1$ satisfies:
    \begin{equation}
        \|R_t\|_F \le \left[1\!-\!\eta(\kappa_{t-1}\!-\!L_A\!-\!L_B L_C D^2)\right]\|R_{t-1}\|_F \label{eq:converge-rt}
    \end{equation}
    As a result, if $\min_t \kappa_t > L_A + L_B L_C D^2$, $R_t \rightarrow 0$ and thus \lowrank{} converges with fixed $P_t$ and $Q_t$.
\end{restatable}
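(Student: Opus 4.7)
The strategy is to split the one-step update of $R_t$ into a ``frozen-coefficient'' linear part that strictly contracts because of the PSD structure of the $\hat B_{it}$ and $\hat C_{it}$, plus a Lipschitz remainder whose size is controlled by $L_A + L_B L_C D^2$. Because $\rho_t \equiv 1$ and the projections are constant, the weight update is simply $W_t - W_{t-1} = \eta P R_{t-1} Q^\top$; throughout I assume (as is standard in \lowrank{}) that $P$ and $Q$ have orthonormal columns, so $\|W_t - W_{t-1}\|_F \le \eta \|R_{t-1}\|_F$ and $\|P^\top M Q\|_F \le \|M\|_F$ for any $M$. Both of these I will use whenever invoking a Lipschitz bound or passing between the full and projected spaces.

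I would then introduce the auxiliary ``frozen'' gradient
\begin{equation*}
\tilde G_t := \frac{1}{N}\sum_{i=1}^N \bigl(A_i(W_{t-1}) - B_i(W_{t-1})\, W_t \, C_i(W_{t-1})\bigr)
\end{equation*}
and write $G_t = \tilde G_t + E_t$. Substituting $W_t - W_{t-1} = \eta P R_{t-1} Q^\top$ into $P^\top \tilde G_t Q$ and recognising $R_{t-1} = P^\top G_{t-1} Q$ produces exactly
\begin{equation*}
P^\top \tilde G_t Q \;=\; R_{t-1} \;-\; \frac{\eta}{N} \sum_{i=1}^N \hat B_{i,t-1}\, R_{t-1}\, \hat C_{i,t-1}.
\end{equation*}
Under vectorization the linear map $R \mapsto \frac{1}{N}\sum_i \hat B_{i,t-1} R \hat C_{i,t-1}$ is represented by $\bar T_{t-1} := \frac{1}{N}\sum_i \hat C_{i,t-1} \otimes \hat B_{i,t-1}$, a sum of PSD matrices; subadditivity of $\lambda_{\min}$ on PSD sums together with the Kronecker identity $\lambda_{\min}(\hat C \otimes \hat B) = \lambda_{\min}(\hat B)\lambda_{\min}(\hat C)$ gives $\lambda_{\min}(\bar T_{t-1}) \ge \kappa_{t-1}$. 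For $\eta$ small enough that $\eta \lambda_{\max}(\bar T_{t-1}) \le 1$, this yields the main contraction $\|P^\top \tilde G_t Q\|_F \le (1 - \eta \kappa_{t-1})\|R_{t-1}\|_F$.

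For the remainder I telescope: $A_i(W_t) - A_i(W_{t-1})$ contributes at most $L_A \eta \|R_{t-1}\|_F$, and the product difference $B_i(W_t) W_t C_i(W_t) - B_i(W_{t-1}) W_t C_i(W_{t-1})$ is split as $[B_i(W_t) - B_i(W_{t-1})] W_t C_i(W_t) + B_i(W_{t-1}) W_t [C_i(W_t) - C_i(W_{t-1})]$, with each factor bounded using $L_B, L_C$ and $\|W_t\|_F \le D$ together with operator-norm control on $B_i, C_i$ that follows from Lipschitz continuity anchored at a reference point. Summing over $i$ and applying projection-nonincrease yields $\|P^\top E_t Q\|_F \le (L_A + L_B L_C D^2)\eta \|R_{t-1}\|_F$, so the triangle inequality on $R_t = P^\top \tilde G_t Q + P^\top E_t Q$ delivers Eq.~\ref{eq:converge-rt}. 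The convergence claim is then immediate by iteration: if $\min_t \kappa_t > L_A + L_B L_C D^2$ the one-step factor is uniformly below $1$, hence $\|R_t\|_F \to 0$ geometrically and with it the updates $\eta P R_t Q^\top$. The main obstacle I anticipate is precisely the last bookkeeping step: recovering the cross form $L_B L_C D^2$ rather than a naive $(L_B + L_C)D^2$ requires trading the operator norms of $B_i$ and $C_i$ for the Lipschitz constants via $\|B_i(W)\|_{\mathrm{op}} \lesssim L_B D$ and $\|C_i(W)\|_{\mathrm{op}} \lesssim L_C D$, and also absorbing the higher-order $\eta^2$ cross term $\Delta B_i\, W_t\, \Delta C_i$ into the same first-order bound without inflating constants; the PSD/Kronecker contraction step is comparatively routine once the linearization is in place.
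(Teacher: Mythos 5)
Your proposal is correct and takes essentially the same route as the paper's proof: decompose $G_t$ into a frozen-coefficient part plus the Lipschitz error $E_t$, get the contraction factor $1-\eta\kappa_{t-1}$ from the spectrum of $\frac{1}{N}\sum_i \hat C_{i,t-1}\otimes \hat B_{i,t-1}$ (the paper simply vectorizes everything from the outset and writes $r_t = (I-\eta\hat S_{t-1})r_{t-1} + (Q\otimes P)^\top e_t$), and bound the projected error via orthonormality of $P,Q$ and the same Lipschitz bookkeeping with $\|B_i\|\le L_B D$, $\|C_i\|\le L_C D$. Two cosmetic remarks: the eigenvalue fact you invoke is superadditivity (not subadditivity) of $\lambda_{\min}$ over sums of symmetric matrices, and your exact two-term splitting of the product difference gives $2L_B L_C D^2$ rather than $L_B L_C D^2$ --- exactly the same harmless factor-of-two discrepancy that appears between the final line of the paper's own proof and the theorem statement, and there is no $\eta^2$ cross term to absorb with that splitting.
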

\textbf{Setting $P$ and $Q$}. The theorem tells that $P$ and $Q$ should project into the subspaces corresponding to the first few largest eigenvectors of $\hat B_{it}$ and $\hat C_{it}$ for faster convergence (large $\kappa_t$). While all eigenvalues of the positive semidefinite (PSD) matrix $B$ and $C$ are non-negative, some of them can be very small and hinder convergence (i.e., it takes a long time for $G_t$ to become $0$). With the projection $P$ and $Q$, $P^\top B_{it} P$ and $Q^\top C_{it} Q$ only contain the largest eigen subspaces of $B$ and $C$, improving the convergence of $R_t$ and at the same time, reduces the memory usage. 

While it is tricky to obtain the eigenstructure of $\hat B_{it}$ and $\hat C_{it}$ (they are parts of Jacobian), one way is to instead use the spectrum of $G_t$ via Singular Value Decomposition (SVD): 
\begin{align}
    \label{eq:svd_p_q}
    G_t &= U S V^{\top} \approx \sum_{i=1}^{r} s_{i} u_{i} v_{i}^{\top} \\
    P_t &= [u_1, u_2, ..., u_r], 
    \quad
    Q_t = [v_1, v_2, ..., v_r]
\end{align}

\textbf{Difference between \lowrank{} and LoRA.}
While both \lowrank{} and LoRA have ``low-rank'' in their names, they follow very different training trajectories. For example, when $r = \min(m, n)$, \lowrank{} with $\rho_t \equiv 1$ follows the exact training trajectory of the original model, as $\tilde G_t = P_t P_t^{\top} G_t Q_t Q_t^\top = G_t$. On the other hand, when $BA$ reaches full rank (i.e., $B \in \mathbb{R}^{m \times m}$ and $A \in \mathbb{R}^{m \times n}$), optimizing $B$ and $A$ simultaneously follows a very different training trajectory compared to the original model.

\section{\lowrank{} for Memory-Efficient Training}
For a complex optimization problem such as LLM pre-training, it may be difficult to capture the entire gradient trajectory with a single low-rank subspace. One reason is that the principal subspaces of $B_t$ and $C_t$ (and thus $G_t$) may change over time. In fact, if we keep the same projection $P$ and $Q$, then the learned weights will only grow along these subspaces, which is not longer full-parameter training. Fortunately, for this, \lowrank{} can switch subspaces during training and learn full-rank weights without increasing the memory footprint.

\subsection{Composition of Low-Rank Subspaces}
\label{sec:composition-subspace}
\begin{figure}
    \centering
    \includegraphics[width=0.58\linewidth]{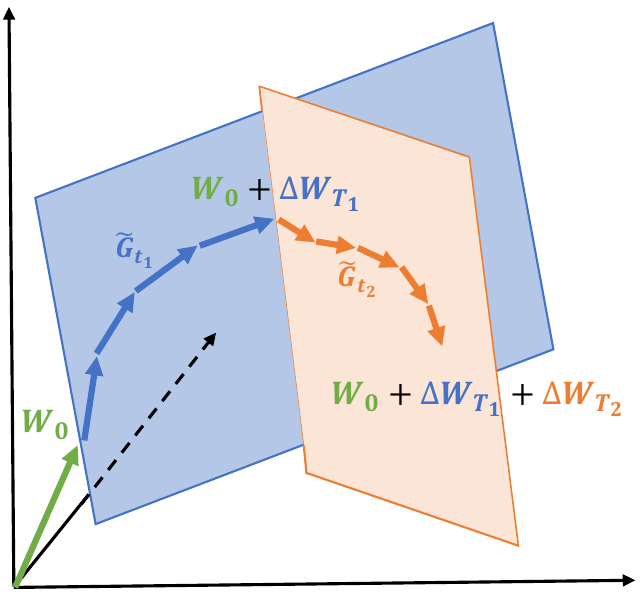}
    \caption{\small{ Learning through low-rank subspaces $\Delta W_{T_1}$ and $\Delta W_{T_2}$ using \lowrank{}. For $t_1 \in [0, T_1 - 1]$, $W$ are updated by projected gradients $\tilde G_{t_1}$ in a subspace determined by fixed $P_{t_1}$ and $Q_{t_1}$. After $T_1$ steps, the subspace is changed by recomputing $P_{t_2}$ and $Q_{t_2}$ for $t_2 \in [T_1, T_2 - 1]$, and the process repeats until convergence.}}
    \label{fig:subspace_learning}
\vspace{-3mm}
\end{figure}

We allow \lowrank{} to switch across low-rank subspaces:
\begin{equation}
    \label{eq:represent_low_rank_updates_multiple}
    W_t = W_0 + \Delta W_{T_1} + \Delta W_{T_2} + \ldots + \Delta W_{T_n},
\end{equation}
where $t \in \left[\sum_{i=1}^{n-1} T_i, \sum_{i=1}^{n} T_i\right]$ and $\Delta W_{T_i} = \eta\sum_{t=0}^{T_i-1} \tilde{G_t}$ is the summation of all $T_i$ updates within the $i$-th subspace.
When switching to $i$-th subspace at step $t=T_i$, we re-initialize the projector $P_t$ and $Q_t$ by performing SVD on the current gradient $G_t$ by Equation \ref{eq:svd_p_q}.
We illustrate how the trajectory of $\tilde{G_t}$ traverses through multiple low-rank subspaces in Fig.~\ref{fig:subspace_learning}.
In the experiment section, we show that allowing multiple low-rank subspaces is the key to achieving the successful pre-training of LLMs.

Following the above procedure, the switching frequency $T$ becomes a hyperparameter. The ablation study (Fig.~\ref{fig:ablation}) shows a sweet spot exists. A very frequent subspace change increases the overhead (since new $P_t$ and $Q_t$ need to be computed) and breaks the condition of constant projection in Theorem~\ref{thm:convgpg}. In practice, it may also impact the fidelity of the optimizer states, which accumulate over multiple training steps. On the other hand, a less frequent change may make the algorithm stuck into a region that is no longer important to optimize (convergence proof in Theorem~\ref{thm:convgpg} only means good progress in the designated subspace, but does not mean good overall performance). While optimal $T$ depends on the total training iterations and task complexity, we find that a value between $T=50$ to $T=1000$ makes no much difference. Thus, the total computational overhead induced by SVD is negligible ($< 10\%$) compared to other memory-efficient training techniques such as memory offloading \citep{rajbhandariZeROMemoryOptimizations2020}.

\subsection{Memory-Efficient Optimization}

\newlength\myindent
\setlength\myindent{2em}
\newcommand\bindent{%
  \begingroup
  \setlength{\itemindent}{\myindent}
  \addtolength{\algorithmicindent}{\myindent}
}
\newcommand\eindent{\endgroup}

\begin{algorithm}[tb]
   \caption{Adam with \lowrank}
   \label{alg:low_rank_adam}
 \begin{algorithmic}
   \STATE {\bfseries Input:} A layer weight matrix $W \in \mathbb{R}^{m \times n}$ with $m \leq n$. Step size $\eta$, scale factor $\alpha$, decay rates $\beta_1, \beta_2$, rank $r$, subspace change frequency $T$.
   \STATE Initialize first-order moment $M_0 \in \mathbb{R}^{n \times r} \gets 0$
   \STATE Initialize second-order moment $V_0 \in \mathbb{R}^{n \times r} \gets 0$
   \STATE Initialize step $t \gets 0$
   \REPEAT
   \STATE $G_t \in \mathbb{R}^{m \times n} \gets - \nabla_W \phi_t(W_t)$ 
   \IF{$t \bmod T = 0$}
   \STATE $U, S, V \gets \text{SVD}(G_t)$
   \STATE $P_t \gets U[:, :r]$ \hfill \COMMENT{Initialize left projector as $m \leq n$}
   \ELSE
   \STATE $P_t \gets P_{t-1}$ \hfill \COMMENT{Reuse the previous projector}
   \ENDIF
   \STATE $R_t \gets P_{t}^{\top} G_t$ \hfill \COMMENT{Project gradient into compact space}
   \\\hrulefill
   \STATE {\bfseries $\update(R_t)$ by Adam}
   \bindent
   \hspace{\algorithmicindent} \STATE $M_t \gets \beta_1 \cdot M_{t-1} + (1 - \beta_1) \cdot R_t$ 
   \hspace{\algorithmicindent} \STATE $V_t \gets \beta_2 \cdot V_{t-1} + (1 - \beta_2) \cdot R_t^2$ 
   \hspace{\algorithmicindent} \STATE $M_t \gets M_t / (1 - \beta_1^t)$
   \hspace{\algorithmicindent} \STATE $V_t \gets V_t / (1 - \beta_2^t)$ 
   \hspace{\algorithmicindent} \STATE $N_t \gets M_t / (\sqrt{V_t} + \epsilon)$
   \eindent
   \\\hrulefill
   \STATE $\tilde G_t \gets \alpha \cdot P N_t$ \hfill \COMMENT{Project back to original space}
   \STATE $W_t \gets W_{t-1} + \eta \cdot \tilde G_t$
   \STATE $t \gets t + 1$
   \UNTIL{convergence criteria met}
   \RETURN $W_t$
 \end{algorithmic}
\end{algorithm}

\textbf{Reducing memory footprint of gradient statistics.} \lowrank{} significantly reduces the memory cost of optimizer that heavily rely on component-wise gradient statistics, such as Adam \citep{kingmaAdamMethodStochastic2014}. 
When $\rho_t \equiv \mathrm{Adam}$, by projecting $G_t$ into its low-rank form $R_t$, Adam's gradient regularizer $\rho_t(R_t)$ only needs to track low-rank gradient statistics.
where $M_t$ and $V_t$ are the first-order and second-order momentum, respectively. 
\lowrank{} computes the low-rank normalized gradient $N_t$ as follows:
\begin{equation}
    \label{eq:low_rank_normalized_gradient}
    N_t = \rho_t(R_t) = M_t / (\sqrt{V_t} + \epsilon).
\end{equation}
\lowrank{} can also apply to other optimizers (e.g., Adafactor) that have similar update rules and require a large amount of memory to store gradient statistics. 
\paragraph{Reducing memory usage of projection matrices.} To achieve the best memory-performance trade-off, we only use one project matrix $P$ or $Q$, projecting the gradient $G$ into $P^\top G$ if $m \leq n$ and $G Q$ otherwise. We present the algorithm applying \lowrank{} to Adam in Algorithm~\ref{alg:low_rank_adam}. 

With this setting, \lowrank{} requires less memory than LoRA during training.
As \lowrank{} can always merge $\Delta W_t$ to $W_0$ during weight updates, it does not need to store a separate low-rank factorization $BA$.
In total, \lowrank{} requires $(mn + mr + 2nr)$ memory, while LoRA requires $(mn + 3mr + 3nr)$ memory.
A comparison between \lowrank{} and LoRA is shown in Table~\ref{tab:lora_compare}.

As Theorem \ref{thm:convgpg} does not require the projection matrix to be carefully calibrated, we can further reduce the memory cost of projection matrices by quantization and efficient parameterization, which we leave for future work.

\subsection{Combining with Existing Techniques}

\lowrank{} is compatible with existing memory-efficient optimization techniques.
In our work, we mainly consider applying \lowrank{} with 8-bit optimizers and per-layer weight updates.

\paragraph{8-bit optimizers.}
\citet{dettmers8bitOptimizersBlockwise2021} proposed 8-bit Adam optimizer that maintains 32-bit optimizer performance at a fraction of the memory footprint.
We apply \lowrank{} directly to the existing implementation of 8-bit Adam.

\paragraph{Per-layer weight updates.}
In practice, the optimizer typically performs a single weight update for all layers after backpropagation. This is done by storing the entire weight gradients in memory. 
To further reduce the memory footprint during training, we adopt per-layer weight updates to \lowrank, which performs the weight updates during backpropagation. This is the same technique proposed in recent works to reduce memory requirement \citep{lvAdaLomoLowmemoryOptimization2023,lvFullParameterFinetuning2023}.

\subsection{Hyperparameters of \lowrank{}}
\label{sec:lowrank-hyperparams}

In addition to Adam's original hyperparameters, \lowrank{} only introduces very few additional hyperparameters: the rank $r$ which is also present in LoRA, the subspace change frequency $T$ (see Sec.~\ref{sec:composition-subspace}), and the scale factor $\alpha$. 

Scale factor $\alpha$ controls the strength of the low-rank update, which is similar to the scale factor $\alpha/r$ appended to the low-rank adaptor in \citet{huLoRALowRankAdaptation2021}.
We note that the $\alpha$ does not depend on the rank $r$ in our case. 
This is because, when $r$ is small during pre-training, $\alpha/r$ significantly affects the convergence rate, unlike fine-tuning.

\begin{table}[t]
    \caption{\small{Comparison between \lowrank{} and LoRA. Assume $W \in \mathbb{R}^{m \times n}$ ($m \leq n$), rank $r$.}}
    
    \label{tab:lora_compare}
    \begin{center}
    \begin{small}
    \begin{tabular}{lcc}
    \toprule
               & \lowrank{} & LoRA \\
    \midrule
    Weights          & $mn$   & $mn+mr+nr$ \\
    Optim States           & $mr + 2nr$   & $2mr + 2nr$  \\
    \midrule
    Multi-Subspace   & \cmark   & \xmark \\
    Pre-Training   & \cmark   & \xmark \\
    Fine-Tuning   & \cmark   & \cmark \\
    \bottomrule
    \end{tabular}
    \end{small}
    \end{center}
\vspace{-6mm}
\end{table}

\section{Experiments}

\begin{table*}[t]
    \centering
    \caption{\small{Comparison with low-rank algorithms on pre-training various sizes of LLaMA models on C4 dataset. Validation perplexity is reported, along with a memory estimate of the total of parameters and optimizer states based on BF16 format. The actual memory footprint of \lowrank{} is reported in Fig.~\ref{fig:memory_vs_model_size}.}}
    \label{tab:lora_compare_llama}
    \begin{tabular}{lcccc}
    \toprule
               & \textbf{60M} & \textbf{130M} & \textbf{350M} & \textbf{1B} \\
    \midrule
    Full-Rank & 34.06 (0.36G) & 25.08 (0.76G) & 18.80 (2.06G) & 15.56 (7.80G) \\
    \midrule
    \textbf{\lowrank} & \textbf{34.88} (0.24G) & \textbf{25.36} (0.52G) & \textbf{18.95} (1.22G) & \textbf{15.64} (4.38G) \\
    Low-Rank & 78.18 (0.26G) & 45.51 (0.54G) & 37.41 (1.08G) & 142.53 (3.57G) \\
    LoRA & 34.99 (0.36G) & 33.92 (0.80G) & 25.58 (1.76G) & 19.21 (6.17G) \\
    ReLoRA & 37.04 (0.36G) & 29.37 (0.80G) & 29.08 (1.76G) & 18.33 (6.17G) \\
    \bottomrule
    $r / d_{model}$ & 128 / 256 & 256 / 768 & 256 / 1024 & 512 / 2048 \\
    Training Tokens & 1.1B & 2.2B & 6.4B & 13.1B \\ %
    \bottomrule
    \end{tabular}
\end{table*}

We evaluate \lowrank{} on both pre-training and fine-tuning of LLMs. All experiments run on NVIDIA A100 GPUs.\vspace{-2mm}

\begin{table}[t]
    \caption{\small{Pre-training LLaMA 7B on C4 dataset for 150K steps. Validation perplexity and memory estimate are reported.}}
    \label{tab:7b_eval}
    \centering
    \begin{tabular}{l|c|cccc}
    \toprule
    & \textbf{Mem}        & \textbf{40K} & \textbf{80K} & \textbf{120K} & \textbf{150K} \\
    \midrule
    \textbf{8-bit \lowrank{}} & 18G & 17.94 & 15.39 & 14.95 & 14.65 \\
    8-bit Adam & 26G & 18.09 & 15.47 & 14.83 & 14.61 \\
    \midrule
    Tokens (B) & & 5.2 & 10.5 & 15.7 & 19.7 \\
    \bottomrule
    \end{tabular}
    \vskip -5mm
\end{table}

\begin{figure*}[ht]
    \vspace{-3mm}
    \centering
    \includegraphics[width=\linewidth]{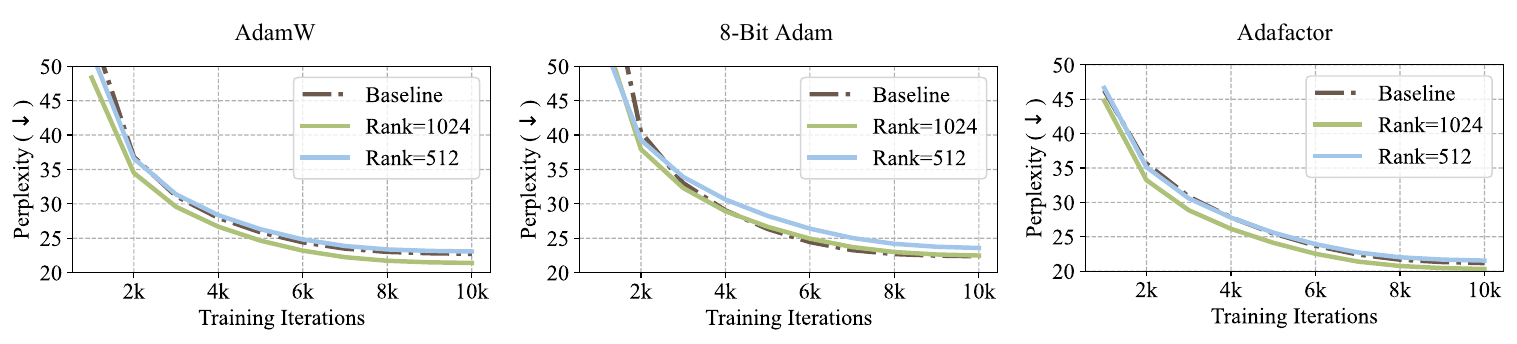}
    \vspace{-7mm}
    \caption{\small{Applying \lowrank{} to different optimizers for pre-training LLaMA 1B on C4 dataset for 10K steps. Validation perplexity over training steps is reported. We apply \lowrank{} to each optimizer with the rank of 512 and 1024, where the 1B model dimension is 2048. }}
    \vspace{-3mm}
    \label{fig:compare_optimizer}
\end{figure*}

\paragraph{Pre-training on C4.}
To evaluate its performance, we apply \lowrank{} to train LLaMA-based large language models on the C4 dataset. 
C4 dataset is a colossal, cleaned version of Common Crawl's web crawl corpus, which is mainly intended to pre-train language models and word representations \citep{raffelExploringLimitsTransfer2023}.
To best simulate the practical pre-training scenario, we train without data repetition over a sufficiently large amount of data, across a range of model sizes up to 7 Billion parameters.
\paragraph{Architecture and hyperparameters.}
We follow the experiment setup from \citet{lialinReLoRAHighRankTraining2023}, which adopts a LLaMA-based\footnote[3]{LLaMA materials in our paper are subject to LLaMA community license.} architecture with RMSNorm and SwiGLU activations \citep{zhangRootMeanSquare2019,shazeerGLUVariantsImprove2020,touvronLlamaOpenFoundation2023}. 
For each model size, we use the same set of hyperparameters across methods, except the learning rate.
We run all experiments with BF16 format to reduce memory usage, and we tune the learning rate for each method under the same amount of computational budget and report the best performance.
The details of our task setups and hyperparameters are provided in the appendix.
\paragraph{Fine-tuning on GLUE tasks.}
GLUE is a benchmark for evaluating the performance of NLP models on a variety of tasks, including sentiment analysis, question answering, and textual entailment \citep{wangGLUEMultiTaskBenchmark2019}.
We use GLUE tasks to benchmark \lowrank{} against LoRA for memory-efficient fine-tuning.

\subsection{Comparison with Existing Low-Rank Methods}

We first compare \lowrank{} with existing low-rank methods using Adam optimizer across a range of model sizes.
\paragraph{Full-Rank}
Our baseline method that applies Adam optimizer with full-rank weights and optimizer states.
\paragraph{Low-Rank}
We also evaluate a traditional low-rank approach that represents the weights by learnable low-rank factorization: $W = BA$ \citep{kamalakaraExploringLowRank2022}.
\paragraph{LoRA}
\citet{huLoRALowRankAdaptation2021} proposed LoRA to fine-tune pre-trained models with low-rank adaptors: $W = W_0 + BA$, where $W_0$ is fixed initial weights and $BA$ is a learnable low-rank adaptor. In the case of pre-training, $W_0$ is the full-rank initialization matrix.
We set LoRA alpha to 32 and LoRA dropout to 0.05 as their default settings.
\paragraph{ReLoRA}
\citet{lialinReLoRAHighRankTraining2023} proposed ReLoRA, a variant of LoRA designed for pre-training, which periodically merges $BA$ into $W$, and initializes new $BA$ with a reset on optimizer states and learning rate. ReLoRA requires careful tuning of merging frequency, learning rate reset, and optimizer states reset. We evaluate ReLoRA without a full-rank training warmup for a fair comparison.

For \lowrank{}, we set subspace frequency $T$ to 200 and scale factor $\alpha$ to 0.25 across all model sizes in Table \ref{tab:lora_compare_llama}.
For each model size, we pick the same rank $r$ for all low-rank methods, and we apply them to all multi-head attention layers and feed-forward layers in the models.
We train all models using Adam optimizer with the default hyperparameters (e.g., $\beta_1=0.9$, $\beta_2=0.999$, $\epsilon=10^{-8}$).
We also estimate the memory usage based on BF16 format, including the memory for weight parameters and optimizer states.
As shown in Table~\ref{tab:lora_compare_llama}, \lowrank{} outperforms other low-rank methods and achieves comparable performance to full-rank training.
We note that for 1B model size, \lowrank{} even outperforms full-rank baseline when $r=1024$ instead of $r=512$.
Compared to LoRA and ReLoRA, \lowrank{} requires less memory for storing model parameters and optimizer states. 
A detailed training setting of each model and memory estimation for each method are in the appendix.

\subsection{\lowrank{} with Memory-Efficient Optimizers}
We demonstrate that \lowrank{} can be applied to various learning algorithms, especially memory-efficient optimizers, to further reduce the memory footprint.
We apply \lowrank{} to AdamW, 8-bit Adam, and Adafactor optimizers \citep{shazeerAdafactorAdaptiveLearning, loshchilovDecoupledWeightDecay2019,dettmers8bitOptimizersBlockwise2021}.
We consider Adafactor with first-order statistics to avoid performance degradation.

We evaluate them on LLaMA 1B architecture with 10K training steps, and we tune the learning rate for each setting and report the best performance.
As shown in Fig.~\ref{fig:compare_optimizer}, applying \lowrank{} does not significantly affect their convergence.
By using \lowrank{} with a rank of 512, the memory footprint is reduced by up to 62.5\%, on top of the memory savings from using 8-bit Adam or Adafactor optimizer.
Since 8-bit Adam requires less memory than others, we denote 8-bit \lowrank{} as \lowrank{} with 8-bit Adam, and use it as the default method for the following experiments on 7B model pre-training and memory measurement.

\subsection{Scaling up to LLaMA 7B Architecture}
Scaling ability to 7B models is a key factor for demonstrating if \lowrank is effective for practical LLM pre-training scenarios.
We evaluate \lowrank{} on an LLaMA 7B architecture with an embedding size of 4096 and total layers of 32.
We train the model for 150K steps with 19.7B tokens, using 8-node training in parallel with a total of 64 A100 GPUs.
Due to computational constraints, we compare 8-bit \lowrank{} ($r=1024$) with 8-bit Adam with a single trial without tuning the hyperparameters.   
As shown in Table \ref{tab:7b_eval}, after 150K steps, 8-bit \lowrank{} achieves a perplexity of 14.65, comparable to 8-bit Adam with a perplexity of 14.61.

\begin{figure}[t!]
    \centering
    \includegraphics[width=0.9\linewidth]{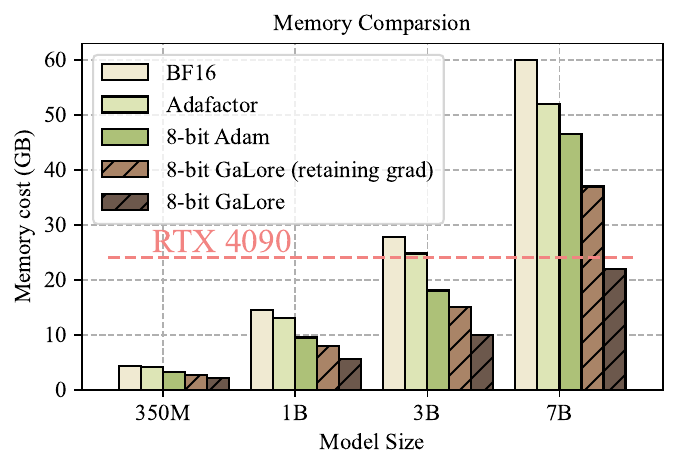}
    \vspace{-4.5mm}
    \caption{\small{Memory usage for different methods at various model sizes, evaluated with a token batch size of 256. 8-bit \lowrank{} (retaining grad) disables per-layer weight updates but stores weight gradients during training.
    }}
    \vspace{-4mm}
    \label{fig:memory_vs_model_size}
\end{figure}

\subsection{Memory-Efficient Fine-Tuning}
\lowrank{} not only achieves memory-efficient pre-training but also can be used for memory-efficient fine-tuning.
We fine-tune pre-trained RoBERTa models on GLUE tasks using \lowrank{} and compare its performance with a full fine-tuning baseline and LoRA.
We use hyperparameters from \citet{huLoRALowRankAdaptation2021} for LoRA and tune the learning rate and scale factor for \lowrank{}.
As shown in Table~\ref{tab:fine_tuning}, \lowrank{} achieves better performance than LoRA on most tasks with less memory footprint.
This demonstrates that \lowrank{} can serve as a full-stack memory-efficient training strategy for both LLM pre-training and fine-tuning.

\begin{table*}[t]
    \caption{\small{Evaluating \lowrank{} for memory-efficient fine-tuning on GLUE benchmark using pre-trained RoBERTa-Base. We report the average score of all tasks.}}
    \label{tab:fine_tuning}
    \centering
    \begin{tabular}{l|c|cccccccc|c} %
    \toprule
               & \textbf{Memory} & \textbf{CoLA} & \textbf{STS-B} & \textbf{MRPC} & \textbf{RTE} & \textbf{SST2} & \textbf{MNLI} & \textbf{QNLI} & \textbf{QQP} & \textbf{Avg} \\
    \midrule
    Full Fine-Tuning & 747M & 62.24 & 90.92 & 91.30 & 79.42 & 94.57 & 87.18 & 92.33 & 92.28 & 86.28 \\
    \midrule
    \textbf{\lowrank{} (rank=4)} & 253M & 60.35 & \textbf{90.73} & \textbf{92.25} & \textbf{79.42} & \textbf{94.04} & \textbf{87.00} & \textbf{92.24} & 91.06 & \textbf{85.89} \\
    LoRA (rank=4) & 257M & \textbf{61.38} & 90.57 & 91.07 & 78.70  & 92.89 & 86.82 & 92.18 & \textbf{91.29} & 85.61 \\
    \midrule
    \textbf{\lowrank{} (rank=8)} & 257M & 60.06 & \textbf{90.82} & \textbf{92.01} & \textbf{79.78} & \textbf{94.38} & \textbf{87.17} & 92.20 & 91.11 & \textbf{85.94} \\
    LoRA (rank=8) & 264M & \textbf{61.83} & 90.80 & 91.90 & 79.06  & 93.46 & 86.94 & \textbf{92.25} & \textbf{91.22} & 85.93 \\
    \bottomrule
    \end{tabular}
    \vskip -0.1in
\end{table*}

\subsection{Measurement of Memory and Throughput}
\label{sec:memory_measure}
While Table~\ref{tab:lora_compare_llama} gives the theoretical benefit of \lowrank{} compared to other methods in terms of memory usage, we also measure the actual memory footprint of training LLaMA models by various methods, with a token batch size of 256. 
The training is conducted on a single device setup without activation checkpointing, memory offloading, and optimizer states partitioning \citep{rajbhandariZeROMemoryOptimizations2020}.

\textbf{Training 7B models on consumer GPUs with 24G memory.} As shown in Fig.~\ref{fig:memory_vs_model_size}, 8-bit \lowrank{} requires significantly less memory than BF16 baseline and 8-bit Adam, and only requires 22.0G memory to pre-train LLaMA 7B with a small per-GPU token batch size (up to 500 tokens). This memory footprint is within 24GB VRAM capacity of a single GPU such as NVIDIA RTX 4090.
In addition, when activation checkpointing is enabled, per-GPU token batch size can be increased up to 4096. While the batch size is small per GPU, it can be scaled up with data parallelism, which requires much lower bandwidth for inter-GPU communication, compared to model parallelism. Therefore, it is possible that \lowrank{} can be used for elastic training~\cite{linDynamicMinibatchSGD2019} 7B models on consumer GPUs such as RTX 4090s. %

Specifically, we present the memory breakdown in Fig.~\ref{fig:memory_breakdown}.
It shows that 8-bit \lowrank{} reduces 37.92G (63.3\%) and 24.5G (52.3\%) total memory compared to BF16 Adam baseline and 8-bit Adam, respectively.
Compared to 8-bit Adam, 8-bit \lowrank{} mainly reduces the memory in two parts: (1) low-rank gradient projection reduces 9.6G (65.5\%) memory of storing optimizer states, and (2) using per-layer weight updates reduces 13.5G memory of storing weight gradients.

\textbf{Throughput overhead of \lowrank{}.} We also measure the throughput of the pre-training LLaMA 1B model with 8-bit \lowrank{} and other methods, where the results can be found in the appendix.
Particularly, the current implementation of 8-bit \lowrank{} achieves 1019.63 tokens/second, which induces 17\% overhead compared to 8-bit Adam implementation. 
Disabling per-layer weight updates for \lowrank{} achieves 1109.38 tokens/second, improving the throughput by 8.8\%. We note that our results do not require offloading strategies or checkpointing, which can significantly impact training throughput. 
We leave optimizing the efficiency of \lowrank{} implementation for future work.

\vspace{-2mm}
\section{Ablation Study}
\paragraph{How many subspaces are needed during pre-training?}

We observe that both too frequent and too slow changes of subspaces hurt the convergence, as shown in Fig.~\ref{fig:ablation} (left). The reason has been discussed in Sec.~\ref{sec:composition-subspace}. In general, for small $r$, the subspace switching should happen more to avoid wasting optimization steps in the wrong subspace, while for large $r$ the gradient updates cover more subspaces, providing more cushion.

\begin{figure}
    \centering
        
    \includegraphics[width=\linewidth]{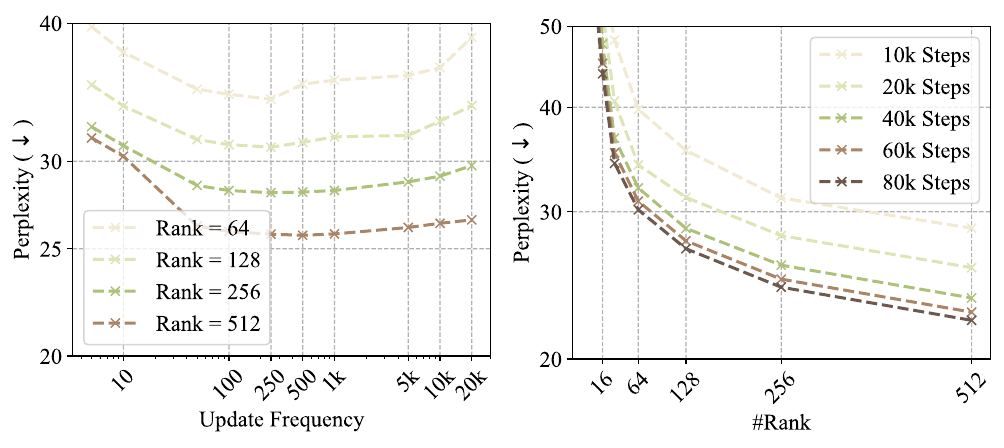}
    \caption{\small{Ablation study of \lowrank{} on 130M models. \textbf{Left:} varying subspace update frequency $T$. \textbf{Right:} varying subspace rank and training iterations.}}
    \vspace{-7mm}
    \label{fig:ablation}
\end{figure}

\paragraph{How does the rank of subspace affect the convergence?}
Within a certain range of rank values, decreasing the rank only slightly affects the convergence rate, causing a slowdown with a nearly linear trend.
As shown in Fig.~\ref{fig:ablation} (right), training with a rank of 128 using 80K steps achieves a lower loss than training with a rank of 512 using 20K steps.
This shows that \lowrank{} can be used to trade-off between memory and computational cost.
In a memory-constrained scenario, reducing the rank allows us to stay within the memory budget while training for more steps to preserve the performance.

\section{Conclusion}

We propose \lowrank{}, a memory-efficient pre-training and fine-tuning strategy for large language models.
\lowrank{} significantly reduces memory usage by up to 65.5\% in optimizer states while maintaining both efficiency and performance for large-scale LLM pre-training and fine-tuning.

We identify several open problems for \lowrank{}, which include (1) applying \lowrank{} on training of various models such as vision transformers \citep{dosovitskiy2021an} and diffusion models \citep{ho2020denoising}, (2) further enhancing memory efficiency by employing low-memory projection matrices, and (3) exploring the feasibility of elastic data distributed training on low-bandwidth consumer-grade hardware.

We hope that our work will inspire future research on memory-efficient training from the perspective of gradient low-rank projection. 
We believe that \lowrank{} will be a valuable tool for the community, enabling the training of large-scale models on consumer-grade hardware with limited resources.

\section*{Impact Statement}

This paper aims to improve the memory efficiency of training LLMs in order to reduce the environmental impact of LLM pre-training and fine-tuning. By enabling the training of larger models on hardware with lower memory, our approach helps to minimize energy consumption and carbon footprint associated with training LLMs.
\section*{Acknowledgments}

We thank Meta AI for computational
support. 
We appreciate the helpful feedback and discussion from Florian Sch{\"a}fer, Jeremy Bernstein, and Vladislav Lialin.
B. Chen greatly appreciates the support by Moffett AI.
Z. Wang is in part supported by NSF Awards 2145346 (CAREER), 02133861 (DMS), 2113904 (CCSS), and the NSF AI Institute for Foundations of Machine Learning (IFML).
A. Anandkumar is supported by the Bren Foundation and the Schmidt Sciences through AI 2050 senior fellow program.

\bibliography{zotero,custom}
\bibliographystyle{icml_template/icml2024}

\newpage
\clearpage
\onecolumn
\appendix
\section{Additional Related Works}

Adafactor \citep{shazeerAdafactorAdaptiveLearning} achieves sub-linear memory cost by factorizing the second-order statistics by a row-column outer product.
\lowrank{} shares similarities with Adafactor in terms of utilizing low-rank factorization to reduce memory cost, but \lowrank{} focuses on the low-rank structure of the gradients, while Adafactor focuses on the low-rank structure of the second-order statistics.

\lowrank{} can reduce the memory cost for both first-order and second-order statistics, and can be combined with Adafactor to achieve further memory reduction. 
In contrast to the previous memory-efficient optimization methods, \lowrank{} operates independently as the optimizers directly receive the low-rank gradients without knowing their full-rank counterparts.

The fused backward operation proposed by LOMO \citep{lvFullParameterFinetuning2023} mitigates the memory cost of storing weight gradients during training.
Integrated with the standard SGD optimizer, LOMO achieves zero optimizer and gradient memory cost during training.
AdaLOMO \citep{lvAdaLomoLowmemoryOptimization2023} enhances this approach by combining the fused backward operation with adaptive learning rate for each parameter, similarly achieving minimal optimizer memory cost.

While LOMO and AdaLOMO represent significant advancements in memory-efficient optimization for fine-tuning or continual pre-training, they might not be directly applicable to pre-training from scratch at larger scales.
For example, the vanilla Adafactor, adopted by AdaLOMO, has been demonstrated to lead to increased training instabilities at larger scales \citep{raeScalingLanguageModels2022,chowdheryPaLMScalingLanguage2022,wortsmanStableLowprecisionTraining,zhaiScalingVisionTransformers2022}.
We believe integrating \lowrank{} with the fused backward operation may offer a promising avenue for achieving memory-efficient large-scale pre-training from scratch.
\def\vec{\mathrm{vec}}

\section{Proofs}

\def\dd{\mathrm{d}}
\def\tr{\mathrm{tr}}
\def\rank{\mathrm{rank}}

\subsection{Reversibility}
\label{sec:reversibility}
\begin{definition}[Reversiblity~\cite{tian2020understanding}]
A network $\cN$ that maps input $\vx$ to output $\vy = \cN(\vx)$ is \emph{reversible}, if there exists $L(\vx; W)$ so that  $\vy= L(\vx; W)\vx$, and the backpropagated gradient $\vg_\vx$ satisfies $\vg_\vx = L^\top(\vx; W) \vg_\vy$, where $\vg_\vy$ is the backpropagated gradient at the output $\vy$. Here $L(\vx;W)$ depends on the input $\vx$ and weight $W$ in the network $\cN$. 
\end{definition}

Note that many layers are reversible, including linear layer (without bias), reversible activations (e.g., ReLU, leaky ReLU, polynomials, etc). Furthermore, they can be combined to construct more complicated architectures: 
\begin{property}
If $\cN_1$ and $\cN_2$ are reversible networks, then (\emph{\textbf{Parallel}}) $\vy = \alpha_1 \cN_1(\vx) + \alpha_2 \cN_2(\vx)$ is reversible for constants $\alpha_1$ and $\alpha_2$, and (\emph{\textbf{Composition}}) $\vy = \cN_2(\cN_1(\vx))$ is reversible. 
\end{property}
From this property, it is clear that ResNet architecture $\vx + \cN(\vx)$ is reversible, if $\cN$ contains bias-free linear layers and reversible activations, which is often the case in practice. For a detailed analysis, please check Appendix A in~\cite{tian2020understanding}. For architectures like self-attention, one possibility is to leverage JoMA~\cite{tian2023joma} to analyze, and we leave for future work.  

The gradient of chained reversible networks has the following structure:
\gradientreversible*
\begin{proof}
Note that for layered reversible network, we have 
\begin{equation}
\cN(\vx) = \cN_L(\cN_{L-1}(...\cN_1(\vx))) = K_L(\vx)K_{L-1}(\vx)\ldots K_1(\vx)\vx 
\end{equation}
Let $\vf_l := \cN_l(\cN_{l-1}(\ldots\cN_1(\vx)))$ and $J_l := K_L(\vx)\ldots K_{l+1}(\vx)$, and for linear layer $l$, we can write $\cN(\vx) = J_lW_l \vf_{l-1}$. Therefore, for the linear layer $l$ with weight matrix $W_l$, we have:
\begin{eqnarray}
    \dd \phi &=& (\vy - \cN(\vx))^\top \dd \cN(\vx) \\
    &=& (\vy - \cN(\vx))^\top K_L(\vx)\ldots K_{l+1}(\vx) \dd W_l \vf_{l-1} \ \ +\ \ \mathrm{terms\ not\ related\ to\ }\dd W_l \\
    &=& (\vy - J_lW_l\vf_{l-1})^\top J_l\dd W_l \vf_{l-1} \\
    &=& \tr(\dd W_l^\top J_l^\top (\vy-J_lW_l\vf_{l-1})\vf^\top_{l-1})
\end{eqnarray}
This gives the gradient of $W_l$:
\begin{equation}
    G_l = J_l^\top \vy \vf^\top_{l-1} - J_l^\top J_l W_l \vf_{l-1}\vf^\top_{l-1} 
\end{equation}
\end{proof}

\textbf{Softmax Case.} Note that for softmax objective with small logits, we can also prove a similar structure of backpropagated gradient, and thus Theorem~\ref{thm:gradientreversible} can also apply.
\begin{restatable}[Gradient structure of softmax loss]{lemma}{gradientsoftmax}
For $K$-way logsoftmax loss $\phi(\vy; \vf) := -\log \left( \frac{\exp(\vy^\top \vf)}{\vone^\top \exp(\vf)}\right)$, let $\hat\vf = P^\perp_\vone \vf$ be the zero-mean version of network output $\vf$, where $P^\perp_\vone := I - \frac{1}{K}\vone\vone^\top$, then we have:
\begin{equation}
   -\dd \phi = \vy^\top \dd\hat\vf - \gamma \hat\vf^\top \dd\hat\vf/K + O(\hat\vf^2/K)\dd\hat\vf  
\end{equation}
where $\gamma(\vy,\vf) \approx 1$ and $\vy$ is a data label with $\vy^\top \vone = 1$.
\end{restatable}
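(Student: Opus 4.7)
The plan is to reduce the softmax loss to a form where the output dependence is manifestly through $\hat\vf = P^\perp_\vone \vf$, and then Taylor expand the softmax probability around $\hat\vf = 0$, exploiting the fact that the softmax is shift-invariant so that only the zero-mean component $\hat\vf$ matters.

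First I would compute the differential of $\phi$ directly. Writing $\phi = -\vy^\top\vf + \log(\vone^\top\exp(\vf))$, a short calculation gives $-\dd\phi = (\vy-\vp)^\top\dd\vf$, where $\vp := \exp(\vf)/(\vone^\top\exp(\vf))$ is the softmax probability vector. Because both $\vy$ and $\vp$ are probability vectors, we have $(\vy-\vp)^\top\vone = 0$, so $(\vy-\vp)^\top\dd\vf = (\vy-\vp)^\top P^\perp_\vone\dd\vf = (\vy-\vp)^\top\dd\hat\vf$. This isolates the entire gradient action to the zero-mean direction.

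Next, I would Taylor expand $\vp$ at $\hat\vf = 0$, using the fact that $\vp(\vf) = \vp(\hat\vf)$ by shift invariance. Writing $\exp(\hat f_k) = 1 + \hat f_k + \tfrac12 \hat f_k^2 + O(\hat f_k^3)$ and summing, the normalizer is $\vone^\top\exp(\hat\vf) = K + \tfrac12\|\hat\vf\|_2^2 + O(\|\hat\vf\|_\infty^3)$, since $\vone^\top\hat\vf = 0$. Dividing yields
\begin{equation}
    \vp = \frac{1}{K}\vone + \frac{\gamma}{K}\hat\vf + O\!\left(\hat\vf^2/K\right), \qquad \gamma = 1 - \frac{\|\hat\vf\|_2^2}{2K} + O\!\left(\|\hat\vf\|_\infty^2\right),
\end{equation}
so that the assumption $\|P^\perp_\vone \vf\|_\infty \ll \sqrt{K}$ of Theorem~\ref{thm:gradientreversible}(b) precisely ensures $\gamma \approx 1$ and controls the remainder.

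Substituting this expansion into $-\dd\phi = (\vy-\vp)^\top\dd\hat\vf$ and using $\vone^\top\dd\hat\vf = \vone^\top P^\perp_\vone\dd\vf = 0$ to kill the $\vone/K$ term yields the claimed identity $-\dd\phi = \vy^\top\dd\hat\vf - \gamma\hat\vf^\top\dd\hat\vf/K + O(\hat\vf^2/K)\dd\hat\vf$. The one subtle point is justifying that the coefficient $\gamma$ can be treated as a single scalar independent of the index $k$; this works because after the leading correction each component of $\vp - \vone/K - \hat\vf/K$ is bounded uniformly by the same scalar remainder, and any index-dependent piece can be absorbed into the $O(\hat\vf^2/K)$ term. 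I expect the only mild obstacle to be tracking these remainders carefully enough to certify that $\gamma$ is genuinely close to $1$ under the stated smallness hypothesis, rather than depending on $\vy$ in an uncontrolled way; this is handled by noting that $\gamma$'s deviation from $1$ is purely a function of $\|\hat\vf\|^2/K$.
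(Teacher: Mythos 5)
Your proposal is correct and follows essentially the same route as the paper: both reduce to the zero-mean logits $\hat\vf$ via shift invariance of the softmax and then Taylor-expand the normalizer $\vone^\top\exp(\hat\vf) = K\bigl(1 + \hat\vf^\top\hat\vf/2K + o(\hat\vf^2/K)\bigr)$, yielding the same scalar $\gamma$ (which, as you note, depends only on $\|\hat\vf\|^2/K$ and not on $\vy$). The only difference is order of operations — you differentiate exactly to get $-\dd\phi = (\vy-\vp)^\top\dd\hat\vf$ and then expand $\vp$, whereas the paper expands $-\phi = \vy^\top\hat\vf - \log(\vone^\top\exp(\hat\vf))$ first and then differentiates — which is immaterial, and your remainder bookkeeping is at the same level of rigor as the paper's.
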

\begin{proof}
Let $\hat\vf := P^\perp_\vone \vf$ be the zero-mean version of network output $\vf$. Then we have $\vone^\top\hat\vf = 0$ and $\vf = \hat\vf + c\vone$. Therefore, we have: 
\begin{equation}
    -\phi = \log \left( \frac{\exp(c)\exp(\vy^\top \hat\vf)}{\exp(c)\vone^\top \exp(\hat\vf)}\right) = \vy^\top\hat\vf - \log(\vone^\top \exp(\hat\vf))
\end{equation}
Using the Taylor expansion $\exp(x) = 1 + x + \frac{x^2}{2} + o(x^2)$, we have: 
\begin{equation}
    \vone^\top \exp(\hat\vf) = \vone^\top(\vone + \hat\vf + \frac12\hat\vf^2) + o(\hat\vf^2) = K (1 + \hat\vf^\top\hat\vf/2K + o(\hat\vf^2/K))
\end{equation}
So
\begin{equation}
    -\phi = \vy^\top\hat\vf - \log(1 + \hat\vf^\top\hat\vf/2K + o(\hat\vf^2/K)) - \log K  
\end{equation}
Therefore
\begin{equation}
    -\dd \phi = \vy^\top\dd \hat\vf - \frac{\gamma}{K} \hat\vf^\top\dd\hat\vf + O\left(\frac{\hat\vf^2}{K}\right)\dd\hat\vf
\end{equation}
where $\gamma := (1 + \hat\vf^\top\hat\vf/2K + o(\hat\vf^2/K))^{-1} \approx 1$. 
\end{proof}
\textbf{Remarks}. With this lemma, it is clear that for a reversible network $\vf := \cN(\vx) = J_l(\vx) W_l\vf_{l-1}(\vx)$, the gradient $G_l$ of $W_l$ has the following form:
\begin{equation}
    G_l = \underbrace{J_lP^\perp_\vone \vy \vf_{l-1}}_A - \underbrace{\gamma J_l^\top P^\perp_\vone J_l}_B W_l \underbrace{\vf_{l-1}\vf_{l-1}^\top / K}_C
\end{equation}

\def\cV{\mathcal{V}}

\subsection{Gradient becomes low-rank}
\gradientlowrank*
\def\vec{\mathrm{vec}}
\begin{proof}
We have
\begin{equation}
    G_t = \frac{1}{N}\sum_{i=1}^N (A_i - B_i W_t C_i) = \frac{1}{N}\sum_{i=1}^N A_i - B_i(W_{t-1} + \eta G_{t-1})C_i = G_{t-1} - \frac{\eta}{N}\sum_{i=1}^N B_i G_{t-1} C_i
\end{equation}
Let $S := \frac{1}{N}\sum_{i=1}^N C_i\otimes B_i$, and $g_t := \vec(G_t) \in \rr^{mn}$ be a vectorized version of the gradient $G_t\in \rr^{m\times n}$. Using $\vec(BWC) = (C^\top \otimes B) \vec(W)$, we have:
\begin{equation}
    g_t = (I - \eta S) g_{t-1} 
\end{equation}
Now let's bound the stable rank of $G_t$:
\begin{equation}
    \text{stable-rank}(G_t) := \frac{\|G_t\|_F^2}{\|G_t\|^2_2} 
\end{equation}
Now $\lambda_1 < \lambda_2$ are the smallest two distinct eigenvectors of $S$. The smallest eigenvalue $\lambda_1$ has multiplicity $\kappa_1$. We can decompose $g_0$ into two components, $g_0 = g^{\parallel}_0 + g^\perp_0$, in which $g^{\parallel}_0$ lies in the $\kappa_1$-dimensional eigenspace $\cV_1$ that corresponds to the minimal eigenvalue $\lambda_1$, and $g^\perp_0$ is its residue. Then $\cV_1 \subset \rr^{mn}$ and its orthogonal complements are invariant subspaces under $S$ and thus: 
\begin{eqnarray}
    \|G_t\|_F^2 &=& \|g_t\|_2^2 = \|(I - \eta S)^t g_{0}\|^2_2 = \|(I - \eta S)^t g^{\parallel}_{0}\|^2_2 + \|(I - \eta S)^t g^{\perp}_{0}\|^2_2 \\
    &\le& (1 - \eta \lambda_2)^{2t} \|g^\perp_0\|_2^2 + (1 - \eta \lambda_1)^{2t} \|g^\parallel_0\|_2^2   
\end{eqnarray}
On the other hand, by our assumption, $G^\parallel_0$ is rank $L$ and thus has SVD decomposition:
\begin{equation}
    G^\parallel_0 = \sum_{l=1}^L c_l \vz_l \vy_l^\top
\end{equation}
with orthonormal unit vectors $\{\vz_l\}_{l=1}^L$ and $\{\vy_l\}_{l=1}^L$ and singular values $\{c_l\}_{l=1}^L1$. This means that 
\begin{equation}
    g^\parallel_0 = \vec(G^\parallel_0) = \sum_{l=1}^L c_l (\vy_l \otimes \vz_l) =: \sum_{l=1}^L c_l \vv_l
\end{equation}
with unit vector $\vv_l := \vy_l \otimes \vz_l \in \cV_1$. It is clear that 
\begin{equation}
\vv^\top_l \vv_{l'} = (\vy^\top_l \otimes \vz^\top_l)(\vy_{l'} \otimes \vz_{l'}) = (\vy^\top_l \vy_{l'})(\vz^\top_l \vz_{l'}) = \mathbb{I}(l=l')
\end{equation}

Therefore, by the definition of spectral norm (or matrix 2-norm), we know it corresponds to the largest singular value, which means:
\begin{eqnarray}
    \|G_t\|_2 &=& \max_{\|\vy'\|_2=1,\|\vz'\|_2=1} \vz^{'\top} G_t \vy' \\
    &\ge& \max_l \vz_l^\top G_t \vy_l = \max_l (\vy_l\otimes \vz_l)^\top g_t \\
    &=& \max_l \vv_l^\top (1 - \eta S)^t g_0 = (1 - \eta \lambda_1)^t \max_l \vv_l^\top g_0
\end{eqnarray}
Note that the last equation is because any $\vv\in \cV_1$ is an eigenvector of $S$ with eigenvalue of $\lambda_1$. 

Since $\vv_l^\top g_0 = \vv_l^\top (g^\perp_0 + g^\parallel_0) = c_l$, $\max_l c_l = \|G^\parallel_0\|_2$ and $\|g^\parallel_0\|_2^2 = \|G^\parallel_0\|_F^2$, we have:
\begin{equation}
    \text{stable-rank}(G_t) := \frac{\|G_t\|_F^2}{\|G_t\|^2_2} \le  \text{stable-rank}(G^\parallel_0) + \left(\frac{1-\eta \lambda_2}{1-\eta \lambda_1}\right)^{2t} \frac{\|G^\perp_0\|_F^2}{\|G_0^\parallel\|_2^2} \label{eq:final-sr-bound}
\end{equation}
\end{proof}

\lowrankmid*
\begin{proof}
Let $C_i = \vf_i\vf_i^\top \in \rr^{n\times n}$. Since $N' := \rank(\{\vf_i\}_{i=1}^N) < n$ and $f_i \in \rr^n$, the collections of vectors $\{\vf_i\}_{i=1}^N$ cannot span the entire space $\rr^n$. Let $\{\vu_j\}_{j=1}^{n-N'}$ be the orthonormal bases for the null space of $\{\vf_i\}_{i=1}^N$, and $\{\ve_k\}_{k=1}^m$ be any orthonormal bases for $\rr^m$. Then the product bases $\{\vu_j\otimes \ve_k\}$ form a set of bases for the minimal eigenspace $\cV_1$ of $S$ with the minimal eigenvalue of $0$. Since $B_i$ are full-rank, no extra dimensions exist for $\cV_1$.

Therefore, when we project $G_{t_0}$ onto $\cV_1$, we have:
\begin{equation}
    \gzeroproj = \sum_{j=1}^{n-N'}\sum_{k=1}^m c_{jk} \vu_j \ve^\top_k = \sum_{j=1}^{n-N'} 
 \vu_j \left(\sum_{k=1}^m c_{jk} \ve_k\right)^\top 
\end{equation}
and thus $\sr(\gzeroproj) \le \rank(\gzeroproj) \le n - N'$,  
since stable rank is a lower-bound of the rank. 

On the other hand, $G_t$ can be written as a summation of $N'$ rank-1 matrices, by representing each $\vf_i = \sum_{j=1}^{N'} b_{ij} \vf'_j$ as a linear combination of $\{\vf'_j\}_{j=1}^{N'}$:  
\begin{equation}
    G_t = \frac1N \sum_{i=1}^N (\va_i - B_i W_t \vf_i)\left(\sum_{j=1}^{N'} b_{ij} \vf'_j\right)^\top = \frac1N \sum_{j=1}^{N'} \left[\sum_{i=1}^N b_{ij} (\va_i - B_i W_t \vf_i)\right] \vf^{'\top}_j 
\end{equation}
and thus has rank at most $N'$. Therefore, when $t$ is sufficiently large so that the second term in Eqn.~\ref{eq:final-sr-bound} is negligible, by Lemma~\ref{lemma:gradientlowrank}, we have (notice that $N' < n$):
\begin{equation}
    \sr(G_t) \le \min(n - N', N') \le n / 2
\end{equation}
\end{proof}

\lowrankhigh*
\begin{proof}
In this case, we have $g^\parallel_0 = \vv\vv^\top g_0 \propto \vv$. Since $\vv = \vy \otimes \vz$, the resulting $G^\parallel_0$ is a rank-1 matrix and thus $\sr(\gzeroproj) = 1$.
\end{proof}

\subsection{Gradient Low-rank property for Transformers}
\label{sec:transformer-low-rank}
Note that Transformers do not belong to the family of reversible networks. However, we can still show that the gradient of the lower layer (i.e., \emph{project-up}) weight $W \in \rr^{m\times n}$ of feed forward network (FFN) becomes low rank over time, using the JoMA framework~\cite{tian2023joma}. Here $m$ is the embedding dimension, and $n$ is the number of hidden nodes in FFNs.
\begin{restatable}[Gradient of Project-up in Transformer FFNs]{lemma}{gradientlowranktransformer}
Suppose the embedding matrix $U \in \rr^{m \times M}$ is fixed and column-orthonormal ($M$ is vocabulary size), the activation functions are linear and the backpropagated gradient are stationary~\cite{tian2023joma}, then the training dynamics of transformed project-up matrix $V := U^\top W \in \rr^{M\times n}$ satisfies the following:
\begin{equation}
    \dot V = \frac{1}{A} \diag\left(\exp\left(\frac{V \circ V}{2}\right)\vone \right)\Delta \label{eq:V-dynamics}
\end{equation}
where $A$ is the normalization factor of softmax, $\circ$ is the Hadamard (element-wise) product and $\Delta$ is defined in the proof. As a result, the gradient of $V$ is ``exponentially more low-rank'' than $V$ itself.  
\end{restatable}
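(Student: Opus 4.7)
The plan is to invoke the JoMA framework of Tian (2023) as a black box for the first half, and then extract the low-rank consequence from the structural form of Eq.~(\ref{eq:V-dynamics}) in the second half. First I would write down the gradient of the project-up matrix $W$ in a Transformer block whose lower FFN feeds through an attention layer with embedding $U$. Under the stated assumptions (fixed column-orthonormal $U$, linear activation, stationary backpropagated signal), the gradient factors as a product of (i) a self-attention-induced soft-selection term over tokens, (ii) an input embedding outer product, and (iii) the backpropagated error $\Delta$, which by stationarity can be treated as a fixed $M \times n$ matrix.

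Next I would change coordinates from $W \in \mathbb{R}^{m \times n}$ to $V := U^\top W \in \mathbb{R}^{M \times n}$. Because $U$ is column-orthonormal, this is an isometric embedding of the relevant subspace and the dynamics of $W$ restricted to $\range(U)$ is equivalent to the dynamics of $V$. The payoff of this change of variables is that each row $\vv_i^\top$ of $V$ corresponds to a single token, so the attention softmax becomes diagonal in $V$. Applying the JoMA calculation (Theorem on joint MLP-attention dynamics), the softmax weights specialize to $\exp(\tfrac{1}{2}\vv_i \circ \vv_i)/A$, which after a row-wise sum produces the diagonal amplifier $\diag\!\left(\exp(V\circ V/2)\vone\right)/A$, and composing this with the per-token error $\Delta$ yields Eq.~(\ref{eq:V-dynamics}). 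The main obstacle here is the JoMA derivation itself, for which I would cite Tian (2023) and give a short proof sketch specialized to the linear-activation case rather than redo the full argument.

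Finally, for the ``exponentially more low-rank'' claim, I would compare the row norms of $V$ and $\dot V$. The $i$-th row of $\dot V$ equals $\frac{1}{A}\bigl(\sum_{j}\exp(v_{ij}^2/2)\bigr)\,\vdelta_i^\top$, so its norm is multiplied by a factor that is \emph{exponential in the squared entries of the $i$-th row of $V$}. Under the gradient flow, rows with larger $\|\vv_i\|$ see $\|\dot\vv_i\|$ grow exponentially faster than rows with smaller $\|\vv_i\|$, producing a positive-feedback self-amplification. Consequently, even if $V$ itself has comparable row norms (i.e.\ is approximately full-rank in its row space), the ratio of the largest row norm of $\dot V$ to the next largest grows like $\exp(\Omega(\|\vv_{(1)}\|^2-\|\vv_{(2)}\|^2))$, so the stable rank $\|\dot V\|_F^2/\|\dot V\|_2^2$ is driven down exponentially fast. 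I would make this precise by bounding $\|\dot V\|_F^2 \le r_{\max}^2 \sum_i \exp(\|\vv_i\|^2)\|\vdelta_i\|^2$ and $\|\dot V\|_2 \ge \exp(\|\vv_{(1)}\|^2/2)\|\vdelta_{(1)}\|/A$, where $r_{\max}$ absorbs constants. The hard part is justifying that the row-norm disparity is generic and not a measure-zero artifact; I would argue this by showing that the ODE $\dot v_{ij} = \tfrac{1}{A}\bigl(\sum_{k}\exp(v_{ik}^2/2)\bigr)\Delta_{ij}$ admits $\|\vv_i\|$ as a Lyapunov-like quantity whose growth rate is itself monotone increasing in $\|\vv_i\|$, so initial asymmetries are amplified rather than washed out, giving the exponential separation of row norms and hence of the stable rank of $\dot V$ relative to that of $V$.
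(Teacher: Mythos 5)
Your first half follows the paper: both you and the authors treat the dynamics in Eq.~\ref{eq:V-dynamics} as an instantiation of JoMA's Theorem~2 under the stated assumptions (fixed column-orthonormal $U$, linear activations, stationary backpropagation), with $\Delta$ the constant matrix whose columns are $\vdelta_j = \mathbb{E}_q[g_j \vx]$, so there is nothing to object to there. The gap is in your second half, specifically in where the row-norm disparity comes from and how it is quantified. The paper does not argue that ``initial asymmetries in $V$ are amplified''; in its analysis every row of $V$ starts at zero magnitude ($\alpha_l(0)=0$), so there is no initial asymmetry to amplify. The key observation you are missing is that, because $\Delta$ is constant in time, each row $\vu_l$ of $V$ stays exactly collinear with the corresponding row $\vmu_l$ of $\Delta$ for all time, $\vu_l(t)=\alpha_l(t)\vmu_l$. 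This collinearity reduces the matrix ODE to $M$ decoupled scalar ODEs $\dot\alpha_l = \frac1A\sum_j e^{\alpha_l^2\mu_{lj}^2}$, which can be sandwiched between equations solvable in closed form via the Gaussian error function; each $\alpha_l$ then blows up at a finite critical time inversely proportional to $\bar\mu_l := \max_j |\mu_{lj}|$ (up to a factor $n$). The separation is therefore inherited from the rows of $\Delta$: under the margin condition $\bar\mu_{l^*} > n\bar\mu_l$ for all $l\neq l^*$, the dominant row of $V$ diverges at its critical time while all other rows remain bounded there, so $V$ becomes rank-1, and $\dot V$ is even more low-rank because the row scalings $\dot\alpha_l$ carry $\alpha_l^2$ in the exponent. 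Your Lyapunov-style sketch bypasses this collinearity reduction --- which is what makes the analysis tractable --- and attributes the separation to the wrong source.

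Beyond the conceptual mismatch, your quantitative step does not close as written. Monotonicity of the growth rate of $\|\vv_i\|$ in $\|\vv_i\|$ does not by itself give divergence of row-norm ratios, because the growth rate also scales with the magnitude of the corresponding row of $\Delta$: with identical $\Delta$ rows and the zero initialization used in the paper there is no separation at all, and with unequal $\Delta$ rows the separation must be argued through $\Delta$, which is exactly what the finite-time blow-up comparison accomplishes. Moreover, the amplification factor of row $i$ is $\sum_j \exp(v_{ij}^2/2)$, which is controlled by the largest entry of that row (it lies between $e^{\bar v_i^2/2}$ and $n\,e^{\bar v_i^2/2}$ with $\bar v_i := \max_j |v_{ij}|$), not by $\exp(\|\vv_i\|^2)$ with the Euclidean norm as in your Frobenius bound; making ``exponentially more low-rank'' precise requires the max-entry version, and the factor $n$ appearing there is precisely why the paper needs the margin $\bar\mu_{l^*} > n\bar\mu_l$ rather than mere strict dominance.
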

\begin{proof}
Let $\Delta := [\vdelta_1, \ldots, \vdelta_n] \in \rr^{M \times n}$, where $\vdelta_j := \mathbb{E}_{q}[g_j \vx] \in \rr^{M}$. Here $g_j$ is the backpropagated gradient of hidden node $j$ in FFN layer, $\mathbb{E}_q[\cdot]$ is the conditional expectation given the query is token $q$, and $\vx$ is the representation of token distribution in the previous layer of Transformer. Specifically, for intermediate layer, $\vx$ represents the activation output of the previous project-up layer; for the first layer, $\vx$ represents the frequency count of the input tokens. Then following the derivation of Theorem 2~\cite{tian2023joma}, we have for each hidden node $j$ and its weight $\vw_j$, the transformed weight $\vv_j := U^\top \vw_j$ satisfies the following dynamics: 
\begin{equation}
    \dot \vv_j = \frac{1}{A} \vdelta_j \circ \exp(\vv_j ^2 / 2)  
\end{equation}
where $\vv^2_j := \vv_j \circ \vv_j$ is the element-wise square of a vector and $\circ$ is the Hadamard (element-wise) product. Since $V := [\vv_1, \ldots, \vv_n]$, Eqn.~\ref{eq:V-dynamics} follows. 

Note that the dynamics of $\vv_j$ shows that the direction of $\vv_j$ will change over time (because of $\exp(\vv_j^2/2)$), and it is not clear how such dynamics leads to low-rank $V$ and even more low-rank $\dot V$. For this, we per-row decompose the matrix $V$:
\begin{equation}
    V := \left[\begin{array}{c}
       \vu_1^\top \\
       \vu_2^\top \\
       \ldots \\
       \vu_M^\top 
    \end{array}\right]
\end{equation}
where $\vu_l \in \rr^n$. We can also do the same for $\Delta$:
\begin{equation}
    \Delta := \left[
    \begin{array}{c}
       \vmu_1^\top \\
       \vmu_2^\top \\
       \ldots \\
       \vmu_M^\top 
    \end{array}\
    \right] 
\end{equation}
where $\vmu_l \in \rr^n$. Then Eqn.~\ref{eq:V-dynamics} can be decomposed along each row:
\begin{equation}
    \dot \vu_l = \frac{1}{A} (e^{\vu^2_l} \cdot \vone)\vmu_l
\end{equation}
Then it is clear that $\vu_l$ is always along the direction of $\vmu_l$, which is a fixed quality since the backpropagated gradient $g_j$ and input $\vx$ are assumed to be stationary (and thus  $\vdelta_j := \mathbb{E}_q[g_j\vx]$ is a constant). 

Therefore, let $\vu_l(t) = \alpha_l(t) \vmu_l$ with initial condition of the magnitude $\alpha_l(0) = 0$, and we have:
\begin{equation}
    \dot \alpha_l = \frac{1}{A}  e^{\alpha_l^2 \vmu_l^2}\cdot \vone = \frac{1}{A} \sum_{j=1}^n e^{\alpha_l^2 \mu^2_{lj}} \label{eq:alpha-dyn}
\end{equation}
where $1\le l\le M$ is the token index. In the following we will show that for different $l$, the growth of $\alpha_l$ can be very different. This leads to very different row norms of $V$ and $\dot V$ over time, leading to their low-rank structures. Note that Eqn.~\ref{eq:alpha-dyn} does not have a close form solution, instead we could estimate its growth:
\begin{equation}
    \frac{1}{A} e^{\alpha_l^2 \bar\mu^2_l}
    \le \dot \alpha_l \le \frac{n}{A} e^{\alpha_l^2 \bar\mu^2_l}
\end{equation}
where $\bar\mu^2_l := \max_j \mu^2_{lj}$. 

\def\erf{\mathrm{erf}}

Note that both sides have analytic solutions using Gaussian error functions $\erf(x) = \frac{2}{\sqrt{\pi}}\int_0^x e^{-t^2}\dd t \in [-1, 1]$. Specifically, for dynamic system like $\dot x = C e^{\beta^2 x^2}$, we have
\begin{equation}
    e^{-\beta^2 x^2} \dd x = C \dd t 
\end{equation}
which gives:
\begin{equation}
    \frac{\sqrt{\pi}}{2\beta} \erf\left(\beta x(t)\right) = 
    \int_0^{x(t)} e^{-\beta^2 y^2} \dd y = C t 
\end{equation}
or 
\begin{equation}
    x(t) = \frac{1}{\beta} \erf^{-1}\left( \frac{2\beta C}{\sqrt{\pi}}t\right)
\end{equation}

For inequality like $\dot x \ge C e^{\beta^2 x^2}$ or $\dot x \le C e^{\beta^2 x^2}$, similar equation can be derived. Plug that in, we have:
\begin{equation}
    \frac{1}{\bar\mu_l} \erf^{-1}\left(\frac{2\bar\mu_l}{A\sqrt{\pi}}t \right)
    \le \alpha_l(t) \le \frac{1}{\bar\mu_l} \erf^{-1}\left(\frac{2n\bar\mu_l}{A\sqrt{\pi}}t \right)
\end{equation}
Let 
\begin{equation}
h(t;a) := \frac{1}{a}\erf^{-1}\left(\frac{2}{\sqrt{\pi}}\frac{a}{A}t\right)   
\end{equation}
then $\lim_{t\rightarrow A \sqrt{\pi} / 2a } h(t;a) = +\infty$, and $h(t;\bar\mu_l) \le \alpha_l(t) \le n h(t; n \bar\mu_l)$. 

Let $l^* = \arg\max_l \bar\mu_l^*$ be the row with the largest entry of $\mu$, then if $\bar\mu_l^* > n\bar\mu_l$ for all $l\neq l^*$, then when $t \rightarrow t^* := \frac{A\sqrt{\pi}}{2\bar\mu_l^*}$, the magnitude $\alpha_{l^*}(t) \ge h(t;\bar\mu_{l^*}) \rightarrow +\infty$, while $\alpha_l(t) \le n h (t; n\bar\mu_l)$ still stay finite, since its critical time $t' := \frac{A\sqrt{\pi}}{2n\bar\mu_l} > t^*$. Since $\alpha_l(t)$ controls the magnitude of each row of $V$, This means that $V$ eventually becomes rank-1 and so does $W$. 

Finally, $\dot V$ is even more low rank than $V$, since $\dot \alpha_l$ has $\alpha_l$ in its exponents. 
\end{proof}

\subsection{Convergence of \lowrank{}}
\convgpg*
\begin{proof}
Using $\vec(AXB) = (B^\top \otimes A)\vec(X)$ where $\otimes$ is the Kronecker product, the gradient assumption can be written as the following:
\begin{equation}
    g_t = a_t - S_t w_t 
\end{equation}
where $g_t := \vec(G_t) \in \rr^{mn}$, $w_t := \vec(W_t) \in\rr^{mn}$ be the vectorized versions of $G_t$ and $W_t$, $a_t := \frac1N\sum_i \vec(A_{it})$ and $S_t = \frac1N\sum_i C_{it} \otimes B_{it}$ are $mn$-by-$mn$ PSD matrix. 

Using the same notation, it is clear to show that:
\begin{eqnarray}
    (Q\otimes P)^\top g_t &=& (Q^\top \otimes P^\top) \vec(G_t) = \vec(P^\top G_t Q) = \vec(R_t) =: r_t \\
    \tilde g_t := \vec(\tilde G_t) &=& \vec(PP^\top G_t QQ^\top) = (Q\otimes P)\vec(R_t) = (Q\otimes P)r_{t} 
\end{eqnarray}

Then we derive the recursive update rule for $g_t$:
\begin{eqnarray}
    g_t &=& a_t - S_t w_t \\
    &=& (a_t - a_{t-1}) + (S_{t-1} - S_t) w_t + a_{t-1} - S_{t-1}w_t \\ 
    &=& e_t + a_{t-1} - S_{t-1}(w_{t-1} + \eta \tilde g_{t-1}) \\
    &=& e_t + g_{t-1} - \eta S_{t-1} \tilde g_{t-1}  
\end{eqnarray}
where $e_t := (a_t - a_{t-1}) + (S_{t-1} - S_t) w_t$. Left multiplying by $(Q\otimes P)^\top$, we have: 
\begin{eqnarray}
    r_t = (Q\otimes P)^\top e_t + r_{t-1} - \eta (Q\otimes P)^\top S_{t-1} (Q\otimes P)r_{t-1} 
\end{eqnarray}
Let 
\begin{equation}
 \hat S_t := (Q\otimes P)^\top S_t (Q\otimes P) = \frac1N \sum_i (Q\otimes P)^\top (C_{it} \otimes B_{it}) (Q\otimes P) = \frac1N \sum_i (Q^\top C_{it}Q) \otimes (P^\top B_{it} P)   
\end{equation}
Then we have:
\begin{equation}
    r_t = (I - \eta \hat S_{t-1})r_{t-1} + (Q\otimes P)^\top e_t
\end{equation}
Now we bound the norm. Note that since $P$ and $Q$ are projection matrices with $P^\top P = I$ and $Q^\top Q = I$, we have: 
\begin{equation}
\|(Q\otimes P)^\top e_t\|_2 = \|\vec(P^\top E_t Q)\|_2 = \|P^\top E_t Q\|_F \le \|E_t\|_F
\end{equation}
where $E_t := \frac1N\sum_i (A_{it} - A_{i,t-1}) + \frac1N\sum_i (B_{i,t-1} W_t C_{i,t-1} - B_{it} W_t C_{it})$. So we only need to bound $\|E_t\|_F$. Note that:
\begin{eqnarray}
    \|A_t - A_{t-1}\|_F &\le& L_A \|W_t - W_{t-1}\|_F = \eta L_A \|\tilde G_{t-1}\|_F \le \eta L_A \|R_{t-1}\|_F \\
    \|(B_t - B_{t-1})W_t C_{t-1}\|_F &\le& L_B \|W_t - W_{t-1}\|_F \|W_t\|_F \|C_{t-1}\|_F = \eta L_B L_C D^2 \|R_{t-1}\|_F \\ 
    \|B_t W_t (C_{t-1} - C_t)\|_F &\le& L_C  \|B_t\|_F \|W_t\|_F\|W_{t-1} - W_t\|_F = \eta L_B L_C D^2 \|R_{t-1}\|_F 
\end{eqnarray}

Now we estimate the minimal eigenvalue of $\hat S_{t-1}$. Let $\bmin_{it} := \lambda_{\min}(P^\top B_{it} P)$ and $\cmin_{it} := \lambda_{\min}(Q^\top C_{it} Q)$, then $\lambda_{\min}((P^\top B_{it} P) \otimes (Q^\top C_{it} Q)) = \bmin_{it}\cmin_{it}$ and for any unit vector $\vv$: 
\begin{equation}
    \vv^\top \hat S_t \vv = \frac1N \sum_i \vv^\top \left[(P^\top B_{it} P) \otimes (Q^\top C_{it} Q)\right]\vv \ge \frac1N \sum_i \bmin_{it}\cmin_{it} 
\end{equation}
And thus $\lambda_{\min}(\hat S_t) \ge \frac1N \sum_i \bmin_{it}\cmin_{it}$. Therefore, $\lambda_{\max}(I - \eta \hat S_{t-1}) \le 1 - \frac{\eta}{N} \sum_i \bmin_{i,t-1}\cmin_{i,t-1}$. Therefore, let $\kappa_t := \frac1N \sum_i \bmin_{it}\cmin_{it}$ and using the fact that $\|r_t\|_2 = \|R_t\|_F$, we have: 
\begin{equation}
    \|R_t\|_F\le \left[1 - \eta (\kappa_{t-1} - L_A - 2L_BL_C D^2)\right] \|R_{t-1}\|_F
\end{equation}
and the conclusion follows. 
\end{proof}

\newpage
\section{Details of Pre-Training Experiment}

\subsection{Architecture and Hyperparameters}
We introduce details of the LLaMA architecture and hyperparameters used for pre-training. 
Table~\ref{tab:llama_hyperparameters} shows the most hyperparameters of LLaMA models across model sizes. 
We use a max sequence length of 256 for all models, with a batch size of 131K tokens.
For all experiments, we adopt learning rate warmup for the first 10\% of the training steps, and use cosine annealing for the learning rate schedule, decaying to 10\% of the initial learning rate.

\begin{table*}[h]
    \caption{Hyperparameters of LLaMA models for evaluation. Data amount are specified in tokens.}
    \label{tab:llama_hyperparameters}
    \vskip 0.15in
    \begin{center}
    \begin{tabular}{cccccccc}
    \toprule
    Params & Hidden & Intermediate & Heads & Layers & Steps & Data amount \\
    \midrule
    60M & 512 & 1376 & 8 & 8  & 10K & $1.3 \mathrm{~B}$ \\
    130M & 768 & 2048 & 12 & 12  & 20K & $2.6 \mathrm{~B}$ \\
    350M & 1024 & 2736 & 16 & 24  & 60K & $7.8 \mathrm{~B}$ \\
    $1 \mathrm{~B}$ & 2048 & 5461 & 24 & 32 & 100K & $13.1 \mathrm{~B}$ \\
    $7 \mathrm{~B}$ & 4096 & 11008 & 32 & 32 & 150K & $19.7 \mathrm{~B}$ \\
    \bottomrule
    \end{tabular}
    \end{center}
    \vskip -0.1in
\end{table*}

For all methods on each size of models (from 60M to 1B), we tune their favorite learning rate from a set of $\{0.01, 0.005, 0.001, 0.0005, 0.0001\}$, and the best learning rate is chosen based on the validation perplexity.
We find \lowrank{} is insensitive to hyperparameters and tends to be stable with the same learning rate across different model sizes.
For all models, \lowrank{} use the same hyperparameters, including the learning rate of $0.01$, scale factor $\alpha$ of $0.25$, and the subspace change frequency of $T$ of $200$. We note that since $\alpha$ can be viewed as a fractional learning rate, most of the modules (e.g., multi-head attention and feed-forward layers) in LLaMA models have the actual learning rate of $0.0025$.
This is, still, a relatively large stable learning rate compared to the full-rank baseline, which usually uses a learning rate $\leq 0.001$ to avoid spikes in the training loss.

\subsection{Memory Estimates}
As the GPU memory usage for a specific component is hard to measure directly, we estimate the memory usage of the weight parameters and optimizer states for each method on different model sizes.
The estimation is based on the number of original parameters and the number of low-rank parameters, trained by BF16 format.
For example, for a 60M model, LoRA ($r=128$) requires $42.7$M parameters on low-rank adaptors and $60M$ parameters on the original weights, resulting in a memory cost of $0.20$G for weight parameters and $0.17$G for optimizer states.
Table~\ref{tab:memory_estimate} shows the memory estimates for weight parameters and optimizer states for different methods on different model sizes, as a compliment to the total memory reported in the main text.

\begin{table*}[h]
    \caption{Memory estimates for weight parameters and optimizer states.}
    \label{tab:memory_estimate}
    \begin{subtable}{.5\linewidth}
        \centering
        \caption{Memory estimate of weight parameters.}
        \begin{tabular}{lcccc}
        \toprule
                   & \textbf{60M} & \textbf{130M} & \textbf{350M} & \textbf{1B} \\
        \midrule
        Full-Rank & 0.12G & 0.25G & 0.68G & 2.60G \\
        \midrule
        \textbf{\lowrank{}} & 0.12G & 0.25G & 0.68G & 2.60G \\
        Low-Rank & 0.08G & 0.18G & 0.36G & 1.19G \\
        LoRA & 0.20G & 0.44G & 1.04G & 3.79G \\
        ReLoRA & 0.20G & 0.44G & 1.04G & 3.79G \\
        \bottomrule
        \end{tabular}
    \end{subtable}%
    \begin{subtable}{.5\linewidth}
        \centering
        \caption{Memory estimate of optimizer states.}
        \begin{tabular}{lcccc}
        \toprule
                   & \textbf{60M} & \textbf{130M} & \textbf{350M} & \textbf{1B} \\
        \midrule
        Full-Rank & 0.23G & 0.51G & 1.37G & 5.20G \\
        \midrule
        \textbf{\lowrank{}} & 0.13G & 0.28G & 0.54G & 1.78G \\
        Low-Rank & 0.17G & 0.37G & 0.72G & 2.38G \\
        LoRA & 0.17G & 0.37G & 0.72G & 2.38G \\
        ReLoRA & 0.17G & 0.37G & 0.72G & 2.38G \\
        \bottomrule
        \end{tabular}
    \end{subtable}
\end{table*}

\newpage
\subsection{Training Progression}
We show the training progression of 130M, 350M, 1B and 7B models in Figure~\ref{fig:loss_curves}.
Compared to LoRA, GaLore closely matches the training trajectory of the full-rank baseline, and it even converges slightly faster at the beginning of the training. 

\begin{figure*}[h]
    \centering
    \includegraphics[width=0.8\textwidth]{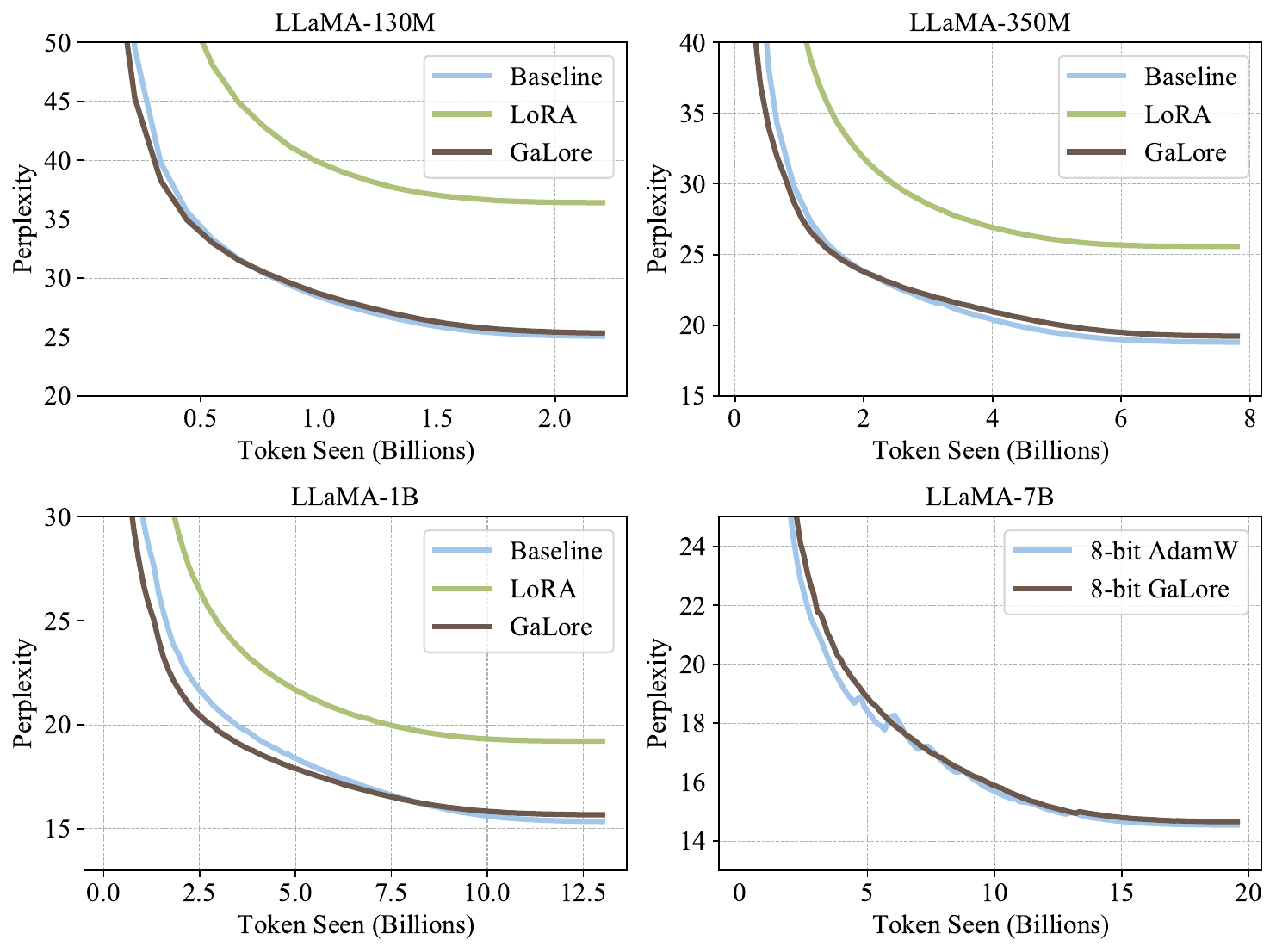}
    \caption{Training progression for pre-training LLaMA models on C4 dataset.}
    \label{fig:loss_curves}
\end{figure*}

\newpage
\section{Fine-Tuning Experiments}

\subsection{Details of Fine-Tuning on GLUE}

We fine-tune the pre-trained RoBERTa-Base model on the GLUE benchmark using the model provided by the Hugging Face\footnote{\url{https://huggingface.co/transformers/model_doc/roberta.html}}.
We trained the model for 30 epochs with a batch size of 16 for all tasks except for CoLA, which uses a batch size of 32.
We tune the learning rate and scale factor for \lowrank{}.
Table~\ref{tab:ft_hyperparameters} shows the hyperparameters used for fine-tuning RoBERTa-Base for \lowrank{}.

\begin{table*}[h]
    \caption{Hyperparameters of fine-tuning RoBERTa base for \lowrank.}
    \centering
    \label{tab:ft_hyperparameters}
    \begin{tabular}{ccccccccc}
    \toprule
    & MNLI   & SST-2 & MRPC    & CoLA    & QNLI    & QQP     & RTE     & STS-B   \\
    \midrule
    Batch Size    & 16     & 16    & 16      & 32      & 16      & 16      & 16      & 16      \\
    \# Epochs     & 30     & 30    & 30      & 30      & 30      & 30      & 30      & 30      \\
    Learning Rate & 1E-05  & 1E-05     & 3E-05   & 3E-05   & 1E-05   & 1E-05   & 1E-05   & 1E-05   \\    
    Rank Config. & & & & $r=4$ & & & & \\
    \lowrank \(\alpha\) & & & & 4 & & & & \\
    Max Seq. Len. & & & & 512 & & & & \\
    \bottomrule
    \end{tabular}
    \vskip 0.1in
    \begin{tabular}{ccccccccc}
    \toprule
    & MNLI   & SST-2 & MRPC    & CoLA    & QNLI    & QQP     & RTE     & STS-B   \\
    \midrule
    Batch Size    & 16     & 16    & 16      & 32      & 16      & 16      & 16      & 16      \\
    \# Epochs     & 30     & 30    & 30      & 30      & 30      & 30      & 30      & 30      \\
    Learning Rate & 1E-05  & 2E-05     & 2E-05   & 1E-05   & 1E-05   & 2E-05   & 2E-05   & 3E-05   \\    
    Rank Config. & & & & $r=8$ & & & & \\
    \lowrank \(\alpha\) & & & & 2 & & & & \\
    Max Seq. Len. & & & & 512 & & & & \\
    \bottomrule
    \end{tabular}
\end{table*}
\subsection{Fine-Tuning on SQuAD dataset}
We evaluate GaLore on the SQuAD dataset \citep{rajpurkarSQuAD1000002016} using the pre-trained BERT-Base model. We use rank $16$ for both GaLore and LoRA. GaLore outperforms LoRA in both Exact Match and F1 scores.
\begin{table*}[h]
    \caption{Evaluating GaLore on SQuAD dataset. Both Exact Match and F1 scores are reported.}
    \label{tab:fine_tuning_squad}
    \centering
    \begin{tabular}{ccc} %
    \toprule
            & \textbf{Exact Match} & \textbf{F1} \\
    \midrule
    Baseline & 80.83 & 88.41 \\
    \midrule
    \textbf{GaLore} & \textbf{80.52} & \textbf{88.29}  \\
    LoRA & 77.99 & 86.11  \\
    \bottomrule
    \end{tabular}
    \vskip -0.1in
\end{table*}

\subsection{Fine-Tuning on OpenAssistant Conversations Dataset}
We apply GaLore on fine-tuning experiments on the OpenAssistant Conversations dataset \citep{kopfOpenAssistantConversationsDemocratizing2023}, using the pre-trained models, including Gemma-2b, Phi-2, and LLaMA-7B \citep{touvronLlamaOpenFoundation2023,gemmateamGemmaOpenModels2024}. We use rank of 128 for both GaLore and LoRA. The results are shown in Table~\ref{tab:fine_tuning_oasst}.

\begin{table*}[h]
    \caption{Evaluating GaLore on OpenAssistant Conversations dataset. Testing perplexity is reported.}
    \label{tab:fine_tuning_oasst}
    \centering
    \begin{tabular}{cccc} %
    \toprule
            & \textbf{Gemma-2b} & \textbf{Phi-2} & \textbf{LLaMA-7B} \\
    \midrule
    Baseline & 4.53 & 3.81 & 2.98 \\
    \midrule
    \textbf{GaLore} & \textbf{4.51} & \textbf{3.83} & 2.95 \\
    LoRA & 4.56 & 4.24 & \textbf{2.94} \\
    \bottomrule
    \end{tabular}
    \vskip -0.1in
\end{table*}

\subsection{Fine-Tuning on Belle-1M Dataset}
We also apply GaLore on fine-tuning experiments on the Belle-1M dataset \citep{BELLE}, using the pre-trained models, including Gemma-2b, Phi-2, and LLaMA-7B. We use rank of 128 for both GaLore and LoRA. The results are shown in Table~\ref{tab:fine_tuning_belle}.
\begin{table*}[h]
    \caption{Evaluating GaLore on Belle-1M dataset. Testing perplexity is reported.}
    \label{tab:fine_tuning_belle}
    \centering
    \begin{tabular}{cccc} %
    \toprule
            & \textbf{Gemma-2b} & \textbf{Phi-2} & \textbf{LLaMA-7B} \\
    \midrule
    Baseline & 5.44 & 2.66 & 2.27 \\
    \midrule
    \textbf{GaLore} & \textbf{5.35} & \textbf{2.62} & \textbf{2.28} \\
    LoRA & 5.37 & 2.75 & 2.30 \\
    \bottomrule
    \end{tabular}
    \vskip -0.1in
\end{table*}
\newpage
\section{Additional Memory Measurements}

We empirically measure the memory usage of different methods for pre-training LLaMA 1B model on C4 dataset with a token batch size of 256, as shown in Table~\ref{tab:memory_measure_1b}.

\begin{table}[!htb]
\centering
    \caption{Measuring memory and throughput on LLaMA 1B model.}
    \label{tab:memory_measure_1b}
    {\begin{tabular}{c|c|c|c|c|cc} \toprule
    \multirow{2}{*}{Model Size} &  \multirow{2}{*}{Layer Wise} & \multirow{2}{*}{Methods} & \multirow{2}{*}{Token Batch Size} & \multirow{2}{*}{Memory Cost} & \multicolumn{2}{c}{Throughput} \\ 
    & & & & & \#Tokens / s & \#Samples / s \\ \hline
    \multirow{4}{*}{1B} &  \multirow{4}{*}{\ding{56}} & AdamW & 256 & 13.60 & 1256.98 & 6.33 \\
     &   & Adafactor & 256 & 13.15 & 581.02 & 2.92 \\
     &   & Adam8bit & 256 & 9.54 & 1569.89 & 7.90 \\
     &   & 8-bit \lowrank{} & 256 & 7.95 & 1109.38 & 5.59 \\ \midrule 
     \multirow{5}{*}{1B} &  \multirow{5}{*}{\ding{52}} & AdamW & 256 & 9.63 & 1354.37 & 6.81 \\
     &   & Adafactor & 256 & 10.32 & 613.90 & 3.09 \\
     &   & Adam8bit & 256 & 6.93 & 1205.31 & 6.07 \\
     &   & 8-bit \lowrank{} & 256 & 5.63 & 1019.63 & 5.13 \\ 
     \bottomrule
    \end{tabular}}
\end{table}

\end{document}